\def\eqref#1{equation~\ref{#1}}
\def\1{\bm{1}}
\DeclareMathAlphabet{\mathsfit}{\encodingdefault}{\sfdefault}{m}{sl}
\SetMathAlphabet{\mathsfit}{bold}{\encodingdefault}{\sfdefault}{bx}{n}
\DeclareMathOperator*{\argmax}{arg\,max}
\definecolor{myorange}{RGB}{2, 142, 2}
\newcommand{\eg}{\emph{e.g., }}
\newcommand{\st}{\emph{s.t. }}
\newcommand{\wrt}{\emph{w.r.t. }}
\newcommand{\cf}{\emph{cf. }}
\newcommand{\piref}{\pi_\text{ref}}
\newtheorem{theorem}{Theorem}[section]
\newtheorem{lemma}[theorem]{Lemma}
\theoremstyle{definition}
\newtheorem{definition}[theorem]{Definition}
\theoremstyle{remark}
\newcommand{\pitheta}{\pi_{\theta}}
\definecolor{-}{rgb}{0.25,0.41,0.88}
\definecolor{+}{rgb}{0.70,0.13,0.13}
\definecolor{lightred}{rgb}{0.94, 0.5, 0.5}
\definecolor{codegreen}{rgb}{0,0.3,0.6}
\definecolor{codegray}{rgb}{0.5,0.5,0.5}
\definecolor{codepurple}{rgb}{0.58,0,0.82}
\definecolor{backcolour}{rgb}{0.95,0.95,0.92}
\newcommand{\wjk}[1]{{{#1}}}
\newcommand{\rebuttal}[1]{{{#1}}}
\definecolor{mylavendar}{RGB}{215,131,255}
\definecolor{myblue}{RGB}{0, 150, 255}
\definecolor{mylightblue}{RGB}{118, 214, 255}
\definecolor{mygreen}{RGB}{115, 250, 121}
\definecolor{myred}{RGB}{255, 38, 0}
\lstdefinestyle{mystyle}{
    basicstyle=\tiny,
    commentstyle=\color{codegreen},
    keywordstyle=\color{magenta},
    numberstyle=\tiny\color{codegray},
    stringstyle=\color{codepurple},
    basicstyle=\fontsize{8.5}{9}\selectfont\ttfamily,
    breakatwhitespace=false,         
    breaklines=true,                 
    captionpos=b,                    
    keepspaces=true,                 
    numbers=left,                    
    numbersep=5pt,                  
    showspaces=false,                
    showstringspaces=false,
    frame = single
}
\title{Towards Robust Alignment of Language \\ 
Models: Distributionally Robustifying \\ Direct Preference Optimization}
\author{Junkang Wu$^{1}$\thanks{Work done at Alibaba Group.}~~Yuexiang Xie$^{2}$~~Zhengyi Yang$^{1}$~~Jiancan Wu$^{1}$\thanks{Jiancan Wu and Xiangnan He are the corresponding authors.} ~~\\\textbf{Jiawei Chen$^{3}$~~Jinyang Gao$^{2}$~~Bolin Ding$^{2}$~~Xiang Wang$^{1}$~~Xiangnan He$^{4}$\footnotemark[2]}\\
  $^{1}$University of Science and Technology of China
  ~~$^{2}$Alibaba Group $^{3}$Zhejiang University\\
  $^{4}$MoE Key Lab of BIPC, University of Science and Technology of China\\
  \texttt{\{jkwu0909, wujcan, xiangnanhe\}@gmail.com}\\ 
}
\begin{document}

\maketitle


\begin{abstract}
    This study addresses the challenge of noise in training datasets for Direct Preference Optimization (DPO), a method for aligning Large Language Models (LLMs) with human preferences. We categorize noise into pointwise noise, which includes low-quality data points, and pairwise noise, which encompasses erroneous data pair associations that affect preference rankings. Utilizing Distributionally Robust Optimization (DRO), we enhance DPO's resilience to these types of noise. Our theoretical insights reveal that DPO inherently embeds DRO principles, conferring robustness to pointwise noise, with the regularization coefficient $\beta$ playing a critical role in its noise resistance. Extending this framework, we introduce Distributionally Robustifying DPO (Dr. DPO), which integrates pairwise robustness by optimizing against worst-case pairwise scenarios. The novel hyperparameter $\beta'$ in Dr. DPO allows for fine-tuned control over data pair reliability, providing a strategic balance between exploration and exploitation in noisy training environments. Empirical evaluations demonstrate that Dr. DPO substantially improves the quality of generated text and response accuracy in preference datasets, showcasing enhanced performance in both noisy and noise-free settings. The code is available at \url{https://github.com/junkangwu/Dr_DPO}.
    \end{abstract}
\section{Introduction}
\label{Introduction}
Aligning Large Language Models (LLMs) \citep{GPT4,llama2,gemini,GPT4_2} with human preferences is critical for their implementation in real-world scenarios. Central to the alignment is the fine-tuning of LLMs using human feedback \citep{instructGPT}, ensuring they adhere to human values and mitigate safety risks.
Among the alignment methods, Reinforcement Learning from Human Feedback (RLHF) \citep{instructGPT} is becoming a widely adopted technology. It initially learns a reward model on pairwise preference data, and optimizes LLMs using the Proximal Policy Optimization (PPO) \citep{PPO} method. However, its inherent reinforcement learning nature poses significant challenges to computational efficiency and training stability \citep{DPO, SLiC-HF}. Addressing these, Direct Preference Optimization (DPO) \citep{DPO} eschews the explicit reward model learning, using human preferences to train the LLMs directly. It achieves the same objectives \citep{ipo} as RLHF by learning an optimal proxy for each pointwise instance and simultaneously ranking preferences in a pairwise manner, offering greater simplicity and training stability \citep{dpo_app}.

While offering an effective solution by directly learning a policy from collected data, DPO inevitably heightens the dependency on the data quality \citep{rso}. However, training data is frequently marred by noise, potentially posing a significant challenge to DPO. Here we delineate two primary noise categories based on their origins:

\begin{itemize}[leftmargin=*]
\item \textit{Pointwise noise} \citep{poor2021text} refers to low-quality data points containing irrelevant or incoherent information. Taking the movie reviews in Figure \ref{fig:main_fig} (Left) as an example,
it might manifest as reviews filled with meaningless chatter, thus rendering them uninformative.
\item \textit{Pairwise noise} \citep{evalGPT4, UltraFeedback}, on the other hand, arises from erroneous associations between data pairs, leading to misjudged preference rankings. Revisiting the movie reviews in Figure \ref{fig:main_fig} (Left), it is evident in misranked reviews where an inferior review ($y_l$) is incorrectly rated higher than a superior one ($y_w$). 
\end{itemize}
\begin{figure}[t!] 
    \centering 
    \includegraphics[width=\columnwidth]{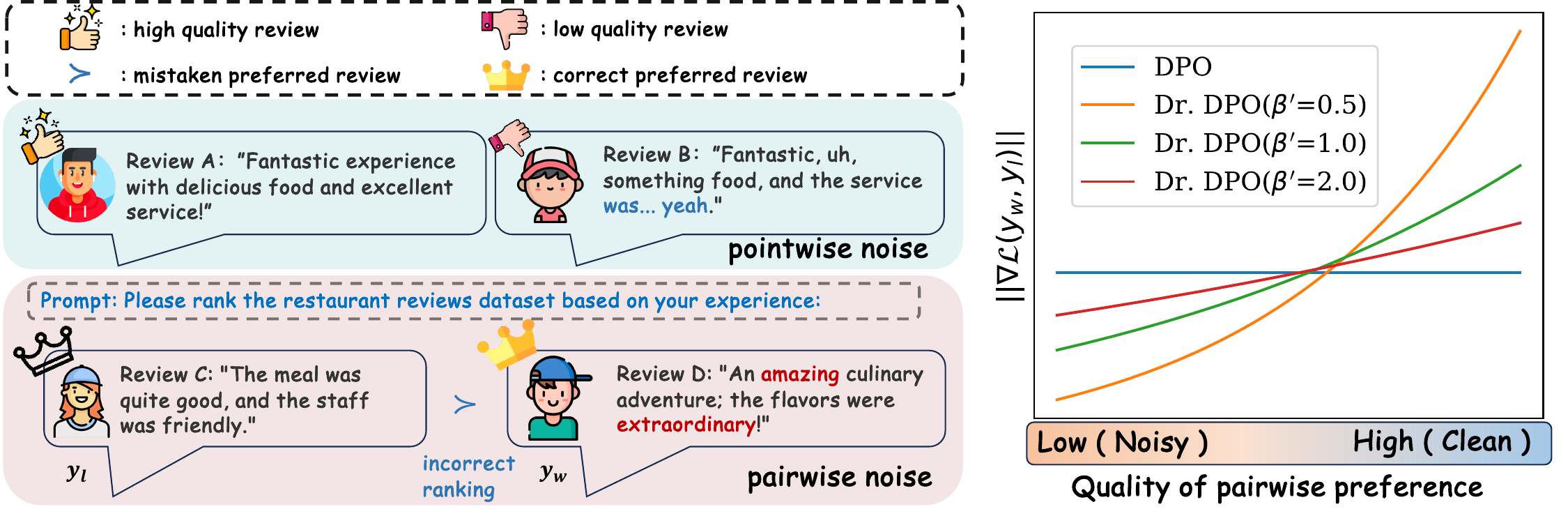}
    \caption{
        \textbf{Left}: An example illustrating pointwise and pairwise noise. \textbf{Right}: Comparison of gradients between DPO and Dr. DPO under varying levels of pairwise noise.}
    \label{fig:main_fig} 
    \vspace{-20pt} 
\end{figure}
The presence of noisy preferences naturally raises a critical question: {\it How robust is DPO against pointwise and pairwise noise?} To answer this, we examine DPO through the lens of Distributionally Robust Optimization (DRO) \citep{namkoong2017variance, duchi2018learning}. At the core of DRO is training a model across a distributional family, which is determined by an empirical distribution within a robust radius $\eta$. As a result, DRO endows the model with enhanced robustness \wrt distributional uncertainty, usually caused by the data noise. By incorporating DRO principles, we can assess the resilience of DPO to the pointwise and pairwise noise. Specifically, our DRO lens on DPO offers insightful findings  as follows:
\begin{itemize}[leftmargin=0.5cm]
\item \textbf{DPO is equivalent to applying DRO on the reward function.}
{The principal contribution of DPO is deriving the optimal policy for PPO in a closed-form expression. This achievement facilitates the implicit determination of a worst-case distribution for optimization, guided by the Kullback-Leibler (KL) divergence criterion. Such an approach endows DPO with intrinsic pointwise robustness, enabling it to explore a better policy model rather than relying solely on the reference model.}
\item \textbf{The DPO's $\beta$ and DRO's $\eta$ share an inverse relationship, highlighting noise levels in the reference model.} Through DRO theory, we establish that higher noise in the reference model necessitates a larger search radius, corresponding to a larger $\eta$ (or equivalently, a smaller $\beta$). This inverse relationship provides a clear measure of the noise level in the reference model.
\end{itemize}
These findings elucidate the strengths of DPO in ensuring pointwise robustness. 
Recent effort \citep{rDPO} has started addressing pairwise noise in DPO frameworks; however, this method relies on explicit noise estimation, a process that is computationally intensive and may not fully capture noise complexities.
Building on these insights, we introduce the \emph{Distributionally Robustifying DPO} (Dr. DPO) \footnote{The abbreviation ``Dr. DPO'' not only encapsulates ``Distributionally Robustifying DPO'' but is playfully intended to echo the abbreviation for "Doctor," adding a quirky element to the naming.} framework, aiming to incorporate pairwise robustness within the DPO paradigm.
The core idea is optimizing against the worst-case pairwise scenarios, enabling the models to implicitly adjust the importance of data pairs in the gradient space and eliminate the explicit noise estimation.
Towards the adjustment, Dr. DPO introduces a simple hyperparameter $\beta' \in (0, +\infty)$ to modulate the loss function, balancing between exploration and exploitation of pairwise preferences.
$\beta'$ serves as a pivotal ``knob'', allowing the navigation from a conservative strategy that diminishes the influence of potentially noisy pairs (\eg $\beta'=0.5$) to a risk-tolerant stance that leverages such pairs (\eg $\beta'=2$).
 Consequently, Dr. DPO fosters a more resilient optimization process that effectively mitigates the influence of both pointwise and pairwise noise. 

In a nutshell, our contribution is the development of Dr. DPO, which robustifies DPO with just a single additional line of code. Empirical evaluations reveal that Dr. DPO significantly enhances performance across diverse settings, such as controlling the sentiment in generated text and improving the response quality in single-turn dialogues, under both noisy and noise-free conditions.
\section{Preliminaries}
\label{Preliminaries}
\textbf{Bradley-Terry Model.}
Given a context $x$ within a finite space of contexts $\mathcal{X}$, we employ the policy $\pi(y|x)$ to independently generate a pair of actions $(y_1, y_2)$. These actions are presented to human raters, who then indicate their preference, with the preferred action labeled as $y_w$ and the less preferred as $y_l$, satisfying $y_w \succeq y_l$. Although we cannot directly observe the latent reward model $r^*(x,y)$ that underlies these preferences, the Bradley-Terry (BT) model \citep{bradley1952rank} offers a well-established approach for modeling pairwise comparisons, which is given as:
\begin{equation}
    p^*(y_1 \succeq y_2|x) = \frac{\exp{(r^*(x, y_1))}}{\exp(r^*(x, y_1) + \exp(r^*(x, y_2)))}.
\end{equation}
Given the dataset $\mathcal{O} =(x^{(i)}, y_{w}^{(i)}, y_{l}^{(i)} )_{i=1}^N$ sampled from $p^*$, we can parametrize a reward model $r_\phi (x, y)$ and estimate the parameters by optimizing the following logistic regression loss:
\begin{equation}
    \begin{aligned}
        \mathcal{L}_R(r_\phi, \mathcal{O})& = - \mathbb{E}_{(x, y_w, y_l) \sim \mathcal{O}} [\log \sigma (r_\phi(x, y_w) - r_\phi(x,y_l))], \\
    \end{aligned}
\end{equation}
where $\sigma(\cdot)$ is the sigmoid function. As the size of dataset $\mathcal{O}$ grows, the empirical distribution of the dataset $\mathcal{O}$ converges to the underlying distribution $p^*$, and the reward model $r_\phi$ converges to the true reward model $r^*$.

\textbf{Reinforcement Learning from Human Feedback (RLHF)}~\citep{instructGPT}.
The standard RLHF paradigm is composed of three phases: i) supervised fine-tuning, ii) reward modeling, and iii) RL fine-tuning. Using the reward model $r_\phi$ learned from the reward modeling, we can then fine-tune the policy $\pi_\theta$ by optimizing the following objective:
\begin{equation}
    \mathop{\max}_{\pi_\theta} \mathbb{E}_{x \sim \mathcal{O}, y \sim \pi_\theta(y|x)}[r_\phi(x,y)] - \beta \mathbb{D}_{\text{KL}}[\pi_\theta(y|x) || \pi_{\text{ref}}(y|x)].
\end{equation}
In practice, both the language model policy $\pi_\theta$ and the reference policy $\pi_{\text{ref}}$ are typically initialized to the same supervised fine-tuning (SFT) model $\pi_{\text{SFT}}$. Here, $\beta$ is a parameter that controls the strength of the regularization term, and $\mathbb{D}_{\text{KL}}$ represents the KL divergence penalty used to regularize the policy $\pi_\theta$ to be close to $\pi_{\text{ref}}$.

\textbf{Directed Preference Optimization (DPO)}~\citep{DPO}.
DPO offers an alternative approach to the RL paradigm described above.
It establishes a functional mapping between the reward model and the optimal policy
under a KL divergence constraint
with the following formulation:
\begin{equation}
    r(x,y) = \beta \log\frac{\pi_\theta(y|x)}{\pi_{\text{ref}(y|x)}} + \beta \log Z(x),
    \label{eq:r_exp}
\end{equation}
where $Z(x)=\sum_y \pi_{\text{ref}}(y|x) \exp(r(x,y)/\beta)$ is the partition function.
By incorporating this reward into the BT model, the DPO objective enables the comparison of response pairs, facilitating the discrimination between preferred and dispreferred actions, given by:
\begin{equation}
    \begin{aligned}
         \mathcal{L}_\text{DPO}(\pi_{\theta}; \piref)  =  -\mathbb{E}_{(x, y_w, y_l)\sim \mathcal{O}}[\log \sigma (\beta \log \frac{\pi_{\theta}(y_w\mid x)}{\piref(y_w\mid x)} - \beta \log \frac{\pi_{\theta}(y_l\mid x)}{\piref(y_l\mid x)})].
    \end{aligned}
    \label{eq:dpo}
\end{equation}

\textbf{Distributionally Robust Optimization (DRO)}~\citep{namkoong2017variance, duchi2018learning}.
DRO provides a strategic framework to effectively mitigate the uncertainty inherent in training data. It achieves this by optimizing for the worst-case expected loss across a set of potential distributions $Q$. These distributions are confined within a robustness radius $\eta$ anchored around the empirical training distribution $Q_0$, and are bounded by a prescribed divergence metric $\mathbb{D}_{\phi}$. The formal formulation of DRO can be succinctly expressed as follows:
\begin{align}
    \mathcal{L}_{\text{DRO}}=\operatorname*{max}_{{Q} }  \mathbb{E}_Q [\mathcal{L}(x;\theta)], \qquad \st \mathbb{D}_{\phi}({Q},{Q}_0) \leq \eta,
\end{align}
where $\mathcal{L}(x;\theta)$ represents the training loss for an input $x$. Intuitively, models employing DRO exhibit increased robustness due to the presence of ${Q}$ that acts as an ``adversary'', optimizing the model under a distribution set with adversarial perturbations instead of a single training distribution.

\section{Analyzing DPO's Pointwise Robustness}
\label{Motivation}
In this section, we explore DPO's robustness to pointwise noise, analyzing its response to noise to identify key strengths and vulnerabilities. We assess how noise degrades performance and leverage insights from DRO to understand DPO's underlying resilience mechanisms.
\subsection{Pointwise Noise Impairs DPO Performance}
\label{sec:noise}

\begin{wrapfigure}{r}{0.3\textwidth} 
    \centering
    \vspace{-0.8cm}
    \!\!\!\!\!\!\!\! \includegraphics[width=0.3\textwidth]{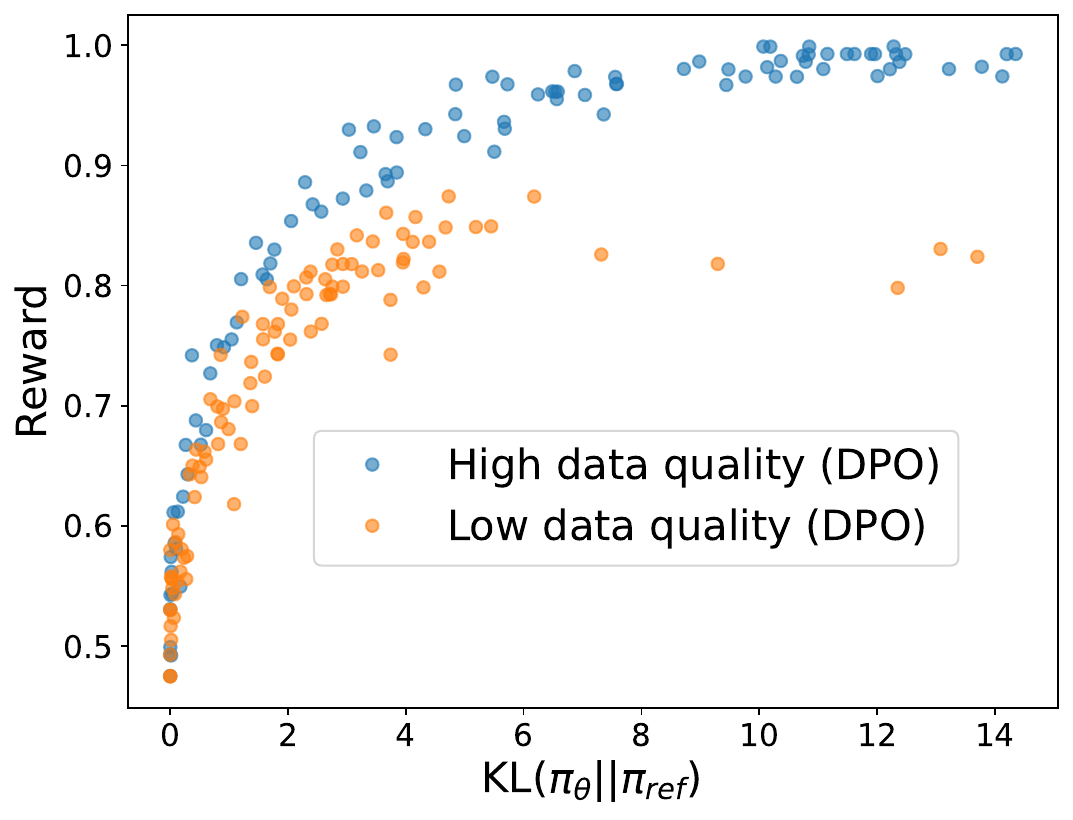} 
    \vspace{-0.2cm}
    \caption{Impact of pointwise noise on the expected reward frontier and KL divergence in DPO ($\beta=0.1$).}
    \vspace{-0.5cm}
    \label{fig:pointwise_effect1}
\end{wrapfigure}
We begin by investigating the impact of pointwise noise on DPO through experiments on the IMDB sentiment dataset \citep{imdb}. Following the setup in \citep{trlx}, we fine-tune the GPT-2-large \citep{gpt2} model and use SiEBERT \citep{HARTMANN202375}, a specialized variant of RoBERTa-large \citep{liu2019roberta}, for reward calculation. Pointwise noise is introduced exclusively during the SFT stage by incorporating responses generated by the unrefined GPT-2-large model, resulting in lower quality data for this stage, while the data used in the DPO stage remains unchanged. To assess DPO's robustness to this pointwise noise, we evaluate each algorithm by examining the trade-off between the achieved reward and the KL divergence from the reference policy.

Figure~\ref{fig:pointwise_effect1} reveals that beyond a $\text{KL}(\pi_\theta||\piref)$ threshold of 10.0, both models converge in terms of reward. Notably, the DPO model trained with high-quality data (blue points) significantly outperforms its low-quality data counterpart (orange points), highlighting the critical impact of data quality on optimizing model performance.

\subsection{Pointwise Robustness in Reward Modeling}
In Section \ref{sec:noise}, we explore how pointwise noise negatively affects individual instance rewards. To address this issue and enhance the robustness of LLMs, we propose integrating DRO during the reward modeling stage. We define the Reward Modeling DRO (RM-DRO) objective, which optimizes the expected reward under the worst-case noise distribution within a specified ambiguity set:
\begin{equation}
    \begin{aligned}
        \mathop{\max}_{\pi_\theta} \mathbb{E}_{x \sim \mathcal{O}, y \sim \pi_\theta(y|x)}[r_\phi(x,y)]  \quad \st \mathbb{D}_{\phi}(\pi_\theta(y|x) , \pi_{\text{ref}}(y|x)) \leq \eta.
    \end{aligned}
    \label{eq:dro_reward}
\end{equation}
{{The direct consequence of pointwise noise is the resultant unreliability of the reference model (SFT).}}
By adopting RM-DRO, we aim to maximize a surrogate objective that accounts for various potential distributions within a robustness radius $\eta$ around the reference distribution $\pi_{\text{ref}}(y|x)$, measured by the distance metric $\mathbb{D}_{\phi}$.
With this formulation, we provide a fresh perspective on DPO.

\textbf{A. DPO is Implicitly a Pointwise DRO.} 
\begin{restatable}[Optimal Reward Function under KL Divergence]{theorem}{gradientmatch}
    \label{thm:cl2dro}
    Let the Kullback-Leibler (KL) divergence between policy $\pi_\theta$ and reference policy $\pi_{\text{ref}}$ be defined as: $\mathbb{D}_{\text{KL}}(\pi_\theta|\pi_{\text{ref}}) = \int \pi_\theta(x) \log\left(\frac{\pi_\theta(x)}{\pi_{\text{ref}}(x)}\right) dx.$
    Optimizing the RM-DRO objective as defined in Equation (\ref{eq:dro_reward}) yields an optimal reward $r_{\text{KL}}(x,y)$ given by:
        \begin{equation}
             r_{\text{KL}}(x,y) = \beta^*(\eta) \log\frac{\pi_\theta(y|x)}{\pi_{\text{ref}(y|x)}} - \alpha.
             \label{eq_thm_1}
        \end{equation}
        Here, $\alpha, \beta$ are Lagrange multipliers, $\beta^*(\eta)$ denotes the optimal value of $\beta$ that minimizes Equation (\ref{eq:dro_reward}), acting as the regularization coefficient in DPO. 
        By deriving the optimal value of $\alpha$, given by:
        \begin{equation}
            \alpha^* = - \beta \log \mathbb{E}_{x \sim \mathcal{O}, y \sim \piref}[\exp( \frac{r_\theta(y|x) }{\beta} ) ],
        \end{equation}
        Equation~\ref{eq_thm_1} can be re-expressed to match the ultimate form of the reward function in Equation~\ref{eq:r_exp}.
\end{restatable}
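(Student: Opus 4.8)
The plan is to treat the RM-DRO problem in Equation~(\ref{eq:dro_reward}) as a constrained maximization over the density $\pi_\theta(\cdot\mid x)$ and solve it by Lagrangian duality. First I would introduce two multipliers: $\beta \geq 0$ for the KL constraint $\mathbb{D}_{\text{KL}}(\pi_\theta\|\piref) \leq \eta$, and $\alpha$ enforcing the normalization $\int \pi_\theta(y\mid x)\, dy = 1$. This gives the Lagrangian
\begin{equation}
\mathcal{L}(\pi_\theta, \beta, \alpha) = \mathbb{E}_{y\sim\pi_\theta}[r_\phi(x,y)] - \beta\left(\mathbb{D}_{\text{KL}}(\pi_\theta\|\piref) - \eta\right) - \alpha\left(\int \pi_\theta(y\mid x)\, dy - 1\right).
\end{equation}

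Next I would compute the first-order stationarity condition by taking the variational derivative $\delta\mathcal{L}/\delta\pi_\theta(y\mid x)$ and setting it to zero. Using $\delta\mathbb{D}_{\text{KL}}/\delta\pi_\theta = \log(\pi_\theta/\piref) + 1$, this yields $r_\phi(x,y) - \beta(\log(\pi_\theta/\piref)+1) - \alpha = 0$. Rearranging and folding the additive constant $\beta$ into the normalization multiplier recovers Equation~(\ref{eq_thm_1}), namely $r_{\text{KL}}(x,y) = \beta^*(\eta)\log(\pi_\theta/\piref) - \alpha$, where $\beta^*(\eta)$ is the optimal KL-multiplier; by complementary slackness the constraint binds at the optimum, so $\beta^*$ is pinned by $\eta$ through $\mathbb{D}_{\text{KL}}(\pi_\theta\|\piref) = \eta$, giving the claimed inverse relationship between $\beta$ and $\eta$.

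For the second part, I would invert this relation to write $\pi_\theta(y\mid x) = \piref(y\mid x)\exp\!\left((r_{\text{KL}}(x,y)+\alpha)/\beta\right)$ and then impose normalization. Integrating over $y$ forces $\exp(\alpha/\beta)\,\mathbb{E}_{y\sim\piref}[\exp(r_{\text{KL}}/\beta)] = 1$, which solves to $\alpha^* = -\beta\log\mathbb{E}_{y\sim\piref}[\exp(r_{\text{KL}}/\beta)]$, the stated expression. Substituting $\alpha^*$ back into Equation~(\ref{eq_thm_1}) and identifying the partition function $Z(x) = \sum_y \piref(y\mid x)\exp(r(x,y)/\beta) = \mathbb{E}_{y\sim\piref}[\exp(r/\beta)]$ yields $r_{\text{KL}}(x,y) = \beta\log(\pi_\theta/\piref) + \beta\log Z(x)$, exactly matching Equation~(\ref{eq:r_exp}).

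The main obstacle I expect is the variational-calculus step: justifying the functional derivative over a probability density rigorously rather than by heuristic pointwise differentiation, and carefully tracking the additive constants so that the spurious $+\beta$ term is correctly absorbed into the normalization multiplier (otherwise the sign and form of $\alpha$ come out wrong). A secondary subtlety is the complementary-slackness argument needed to conclude that the KL constraint is active, so that $\beta^*(\eta)$ is genuinely determined by $\eta$; this goes beyond merely setting the gradient to zero and requires invoking the KKT conditions for the constrained program.
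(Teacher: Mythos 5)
Your proposal is correct and reaches the same conclusions, but it travels a genuinely different (more elementary) route than the paper. You apply KKT stationarity directly to the density $\pi_\theta(\cdot\mid x)$ via a pointwise variational derivative of the KL term, then recover $\alpha^*$ from the normalization constraint; the sign/constant bookkeeping you flag (absorbing the spurious $+\beta$ into the normalization multiplier) is handled correctly. The paper instead changes variables to the likelihood ratio $L(y|x)=\pi_\theta(y|x)/\piref(y|x)$, invokes strong duality and an interchange-of-minimization-and-integration theorem to push the inner maximization inside the expectation, and identifies the result as the convex conjugate $\phi^*$ of the divergence generator, specializing to $\phi^*_{\mathrm{KL}}(x)=e^x-1$ only at the end. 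The two derivations meet at the same stationarity condition $L^*(y|x)=\exp\bigl((r_\theta(y|x)+\alpha)/\beta\bigr)$ and the same $\alpha^*$. What the paper's conjugate machinery buys is generality: the identical argument yields the optimal reward under any $\phi$-divergence (used in its Appendix on general $\phi$-divergences), and it sidesteps the rigor worry you raise about heuristic functional differentiation, since the interchange theorem plus ordinary convex duality in the scalar variable $L$ does the work. What your route buys is transparency for the KL case and a cleaner view of where each constant comes from. Your complementary-slackness remark about the constraint binding is a reasonable addition but is not needed for the theorem as stated; the paper simply defines $\beta^*(\eta)$ as the minimizer of the dual and defers the quantitative $\beta$--$\eta$ relationship to a cited lemma.
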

\rebuttal{Please refer to Appendix \ref{appendix_1} for detailed proofs and Appendix \ref{formal_proof} for the formal proof.}
For a broader discussion on optimal reward functions under general $\phi$-divergences, see Appendix \ref{analysis_phi}.

Consistent with the reward function formulation in \citet{DPO}, Theorem \ref{thm:cl2dro} not only reaffirms established results but also introduces several novel insights, as outlined below:

{
\textbf{Why DPO is Robust to Pointwise Noise.}
We propose that the reference distribution closely mirrors the empirical training distribution, given the pre-training step (SFT) common to both RLHF and DPO methods. This ensures the reference distribution in the DPO phase accurately reflects the training data noise. In terms of DRO, while the reference model $\pi_{\text{ref}}$ may not be entirely \emph{reliable}, the implicit robust framework of DPO counters data perturbations effectively. 
\rebuttal{Specifically, }
the ``worst-case distribution'' is defined as the distribution that maximizes risk within established divergence constraints, analogous to an adversarial noise model in DRO. 
Varying $\beta$ enables DPO to exhibit varying search space for a better $\pi_\theta$, leading to improved performance. \rebuttal{For more discussion about the connection between DPO and DRO, please refer to Appendix \ref{comparison_dpo_dro}.}

}

Moreover, the incorporation of DRO provides a new interpretation of the coefficient $\beta$ in DPO, transforming it from a mere heuristic design into a ``noise reflector''. We provide Lemma \ref{lemma:beta} to disclose the relationship between $\beta$ and $\eta$.

\begin{figure}[t]
    \begin{subfigure}[t]{0.45\textwidth}
    \centering
    \raisebox{.8pt}{\includegraphics[width=\textwidth]{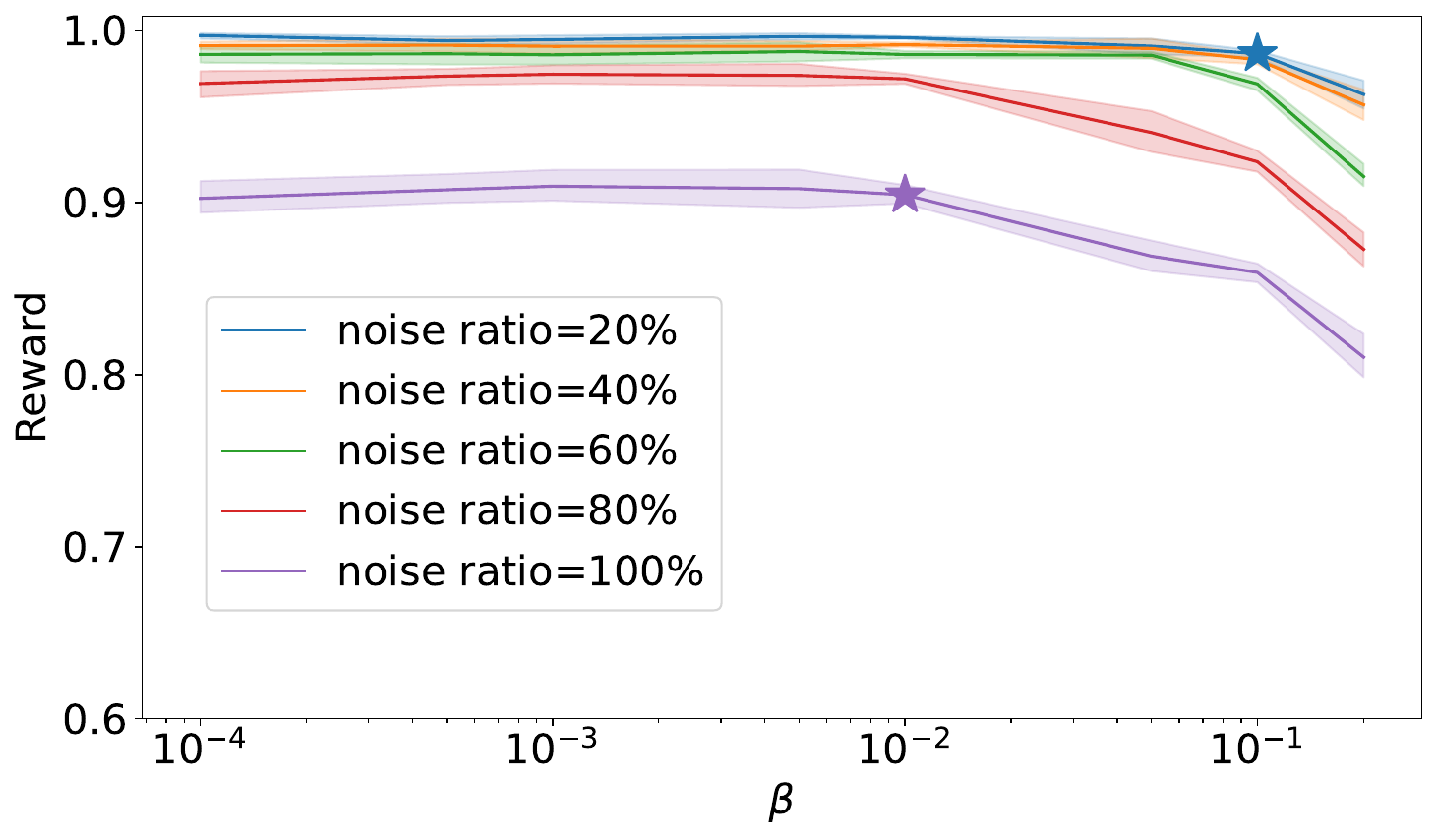}}
    \caption{Performance w/ different $\beta$ on IMDB. \label{fig:pointwise_effect2}}
    \end{subfigure}%
    ~
    \begin{subfigure}[t]{0.45\textwidth}
    \centering
    \includegraphics[width=\textwidth]{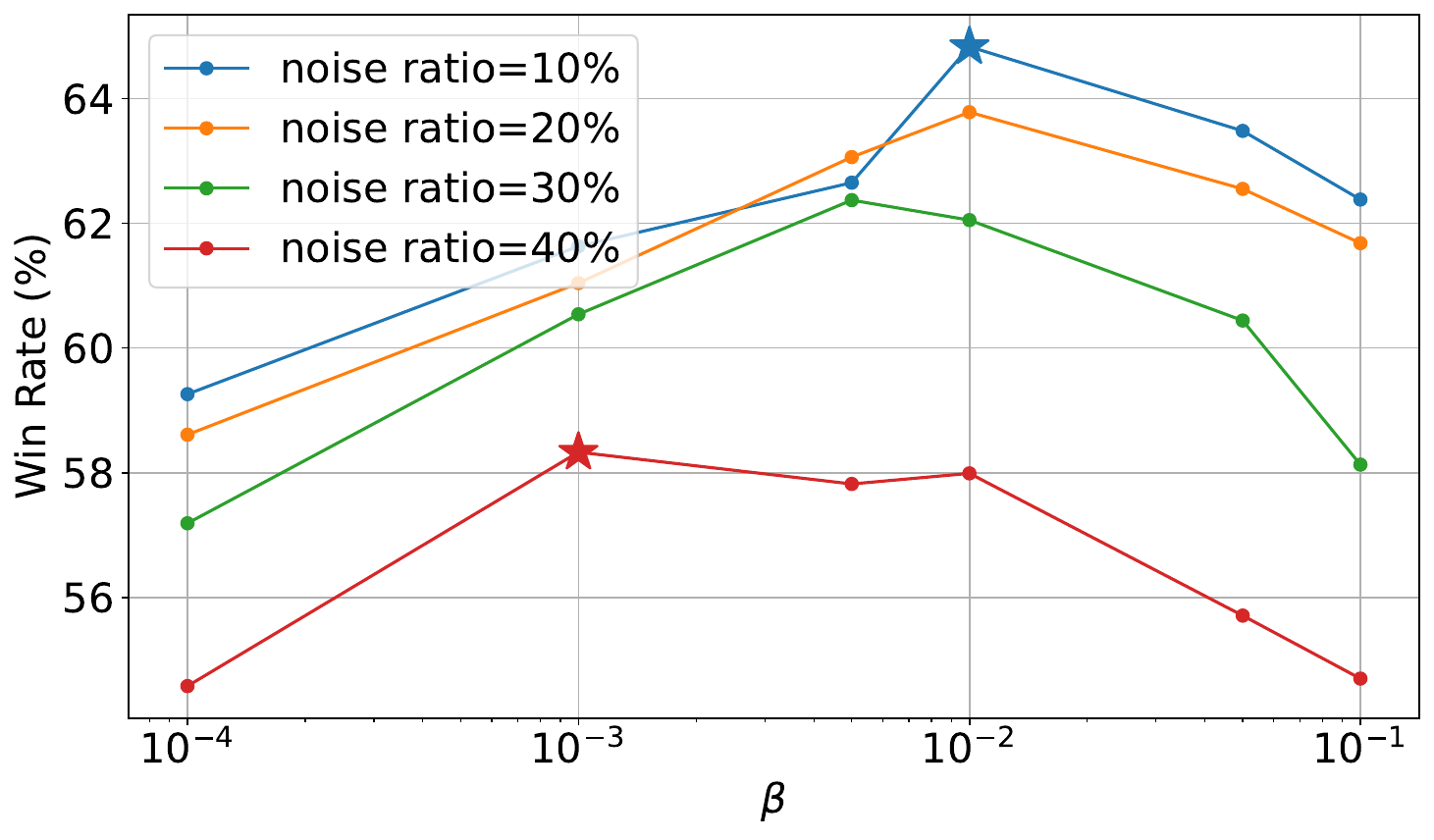}
    \caption{Performance w/ different $\beta$ on HH. \label{fig:pointwise_winrate}}
    \end{subfigure}%
    \caption{
    (a) Comparative analysis of the effect of pointwise noise on the expected reward frontier for different $\beta$ values on IMDB dataset.
    (b) Comparative analysis of the effect of pointwise noise on on the win rate for different $\beta$ values on HH dataset.
    The star ($\textcolor{red}{\star},\textcolor{blue}{\star},\textcolor{codepurple}{\star}$) indicates the optimal $\beta$ selection for the corresponding pointwise noise ratio.
    }
    \vspace{-1.5em}
    \label{fig:delta_study} 
\end{figure}
\textbf{B. The Optimal Value of $\beta$ Reflects the Noise within the SFT Model.}

\begin{lemma}\citep[Lemma 5]{aaai_louis}
    \label{lemma:beta}
    The optimal $\beta^*(\eta)$ in DPO is monotonically decreasing with respect to $\eta$ and obeys the following relationship:
    \begin{equation}
        \beta^*(\eta) = \sqrt{\mathbb{V}_{\pi_\text{ref}}[r(x,y)] / 2\eta},
    \end{equation}
    where $\mathbb{V}_{\pi_\text{ref}}[r(x,y)]=\sum_y \pi_{\text{ref}}(x,y)(r(y|x)-\sum_y \pi_{\text{ref}}(y|x)r(x,y))^2$ denotes the variance of the reward model $r(x,y)$ under the reference distribution $\pi_\text{ref}$.
\end{lemma}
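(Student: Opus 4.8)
The plan is to recover $\beta^*(\eta)$ as the optimal Lagrange multiplier of the KL-constrained RM-DRO problem and then pin down its dependence on $\eta$ via a second-order expansion of the associated cumulant generating function. First I would form the Lagrangian dual of the constrained maximization in Equation~(\ref{eq:dro_reward}) with $\mathbb{D}_\phi = \mathbb{D}_{\text{KL}}$. By Theorem~\ref{thm:cl2dro}, the optimizing distribution has the Gibbs form proportional to $\piref\,\exp(r/\beta)$, so evaluating the inner maximum through the Donsker--Varadhan variational identity collapses the problem to a one-dimensional minimization over the multiplier $\beta \ge 0$,
\begin{equation}
    \beta^*(\eta) = \argmin_{\beta > 0}\; g(\beta), \qquad g(\beta) = \beta\eta + \beta\log\mathbb{E}_{x\sim\mathcal{O},\,y\sim\piref}\!\left[\exp\!\left(\frac{r(x,y)}{\beta}\right)\right],
\end{equation}
where the $\beta\eta$ term is the contribution of the radius constraint and the log-moment term is the dual of the KL penalty. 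This identifies $\beta$ precisely as the multiplier whose size trades off the two, so looser constraints (larger $\eta$) should push $\beta^*$ down.

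Next I would identify the log-moment term as $\beta K(1/\beta)$, where $K(t) = \log\mathbb{E}_{\piref}[\exp(t\,r(x,y))]$ is the cumulant generating function of the reward under the reference distribution. Using $K(0)=0$, $K'(0) = \mathbb{E}_{\piref}[r]$ and $K''(0) = \mathbb{V}_{\piref}[r]$, a Taylor expansion gives $K(t) = t\,\mathbb{E}_{\piref}[r] + \tfrac{t^2}{2}\mathbb{V}_{\piref}[r] + O(t^3)$. Substituting $t = 1/\beta$ yields
\begin{equation}
    g(\beta) = \mathbb{E}_{\piref}[r] + \beta\eta + \frac{\mathbb{V}_{\piref}[r]}{2\beta} + O\!\left(\frac{1}{\beta^2}\right),
\end{equation}
in which the first term is constant in $\beta$, so the minimizer is governed entirely by the balance $\beta\eta + \mathbb{V}_{\piref}[r]/(2\beta)$.

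I would then solve the stationarity condition $g'(\beta) = 0$, i.e.\ $\eta - \mathbb{V}_{\piref}[r]/(2\beta^2) = 0$, which gives directly $\beta^*(\eta) = \sqrt{\mathbb{V}_{\piref}[r]/(2\eta)}$; the second derivative $\mathbb{V}_{\piref}[r]/\beta^3 > 0$ confirms this is the unique minimizer. Monotonicity is then immediate: as a function of the form $c\,\eta^{-1/2}$ with $c = \sqrt{\mathbb{V}_{\piref}[r]/2} > 0$, $\beta^*$ is strictly decreasing on $\eta > 0$, which is exactly the claimed inverse relationship between the DPO coefficient and the DRO radius.

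The main obstacle is rigor in the expansion step: the clean square-root identity holds to leading order, valid in the small-$\eta$ (equivalently large-$\beta$) regime where the higher cumulants absorbed into the $O(1/\beta^2)$ remainder are negligible relative to $\mathbb{V}_{\piref}[r]/(2\beta)$. Making this precise would require (i) assuming $r$ is sub-exponential under $\piref$ so that $K(t)$ is finite and smooth in a neighborhood of $t=0$, and (ii) verifying that the stationary point genuinely lands in the large-$\beta$ regime as $\eta \downarrow 0$, so that truncating the expansion is self-consistent. I would bound the remainder term and confirm that reinstating it perturbs $\beta^*$ only at higher order in $\eta$, leaving the leading-order formula intact.
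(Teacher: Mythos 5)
The paper does not prove this lemma at all: it is imported verbatim as Lemma~5 of the cited reference (\texttt{aaai\_louis}), and no derivation appears anywhere in the appendix, so there is no in-paper argument to compare yours against. That said, your derivation is the standard route to this identity and it is sound. Your starting point, $g(\beta)=\beta\eta+\beta\log\mathbb{E}_{\piref}[\exp(r/\beta)]$, is exactly the one-dimensional dual that the paper itself reaches at the end of the proof of Theorem~\ref{thm:cl2dro} (Appendix~\ref{appendix_1}), so the lemma is genuinely a continuation of machinery already present in the paper. The cumulant-generating-function expansion $\beta K(1/\beta)=\mathbb{E}_{\piref}[r]+\mathbb{V}_{\piref}[r]/(2\beta)+O(1/\beta^{2})$, followed by the stationarity condition $\eta=\mathbb{V}_{\piref}[r]/(2\beta^{2})$, gives precisely the claimed $\beta^*(\eta)=\sqrt{\mathbb{V}_{\piref}[r]/(2\eta)}$ and its monotone decrease in $\eta$. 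Your one point of honesty is also the one substantive caveat worth recording: the lemma as stated presents an exact equality, but the square-root formula is exact only under the quadratic truncation of $K$; for a general reward distribution the higher cumulants shift $\beta^*$ at higher order in $\eta$, so the identity should be read as the leading-order (small-$\eta$, large-$\beta$) behaviour unless the reward is Gaussian under $\piref$. Since the paper uses the lemma only qualitatively (to argue the inverse relationship between $\beta$ and $\eta$), this caveat does not undermine anything downstream, but a fully rigorous version would need the sub-exponential moment condition and remainder bound you sketch in your final paragraph.
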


\wjk{
Lemma \ref{lemma:beta} elucidates the inverse correlation between the parameter $\beta$ and the robustness radius $\eta$. Specifically, as noise within the model increases, the required search space expands, necessitating a larger $\eta$ and consequently a smaller optimal $\beta$.

To empirically validate this relationship, we conducted experiments on the IMDB dataset, as outlined in Section \ref{sec:noise}. In these experiments, the noise ratio is controlled by the proportion of low-quality pairs $(y_w, y_l)$ introduced into the training data, generated by the unrefined GPT-2 model. Figure \ref{fig:pointwise_effect2} shows that models trained with lower $\beta$ values (e.g., 0.01) outperform those with higher $\beta$ values (e.g., 0.1) when trained on 100\% low-quality data. This is because a lower $\beta$ allows for a larger search space to counteract significant pointwise noise in the SFT model.

We also conducted experiments on the HH dataset, injecting pointwise noise during the SFT phase by incorporating rejected responses into the training samples. Importantly, during the DPO phase, the positive and negative samples remained consistent, ensuring noise was introduced only during SFT. The noise ratio is determined by the proportion of rejected responses used as training samples during SFT. As shown in Figure \ref{fig:pointwise_winrate}, the optimal value of $\beta$ decreases as the noise ratio increases, indicating that higher noise levels in SFT require a smaller $\beta$ for optimal performance.

For detailed experimental settings and procedures for both datasets, please refer to Appendix~\ref{sec:appendix_setup}, where more comprehensive explanations are provided.
}

\section{Dr. DPO: Toward Pairwise Robustness}
\label{sec:method}
In this section, we investigate the impact of pairwise noise and introduce Dr. DPO as a mitigation strategy. We conclude with a theoretical examination of its robustness against such noise.
\subsection{Pairwise Noise Impairs DPO Convergence and Performance}
\label{pair_sec}
We previously explored DPO's pointwise robustness, while recent work \citep{rDPO} has examined its resilience to pairwise noise. However, methods that rely on explicit noise estimation may overlook complex noise behaviors.
We thus empirically examine how pairwise noise affects DPO’s performance.
Similar to the experimental settings in Section \ref{sec:noise}, we corrupt the dataset with pairwise noise by randomly swapping pairs $(y_w,y_l)$ in the preference dataset $\mathcal{O}=\{ x^{(i)}, y_w^{(i)}, y_l^{(i)}\}_{i=1}^N$.
As illustrated in Figure \ref{fig:pairnoise_effect} (Left), when exposed to elevated levels of pairwise noise (specifically, 10\% and 30\% label flipping), DPO exhibits a diminished rate of increase in $\text{KL}(\pi_\theta||\pi_{ref})$ over an equivalent number of training epochs ($\text{epoch}=1$).
This pattern is indicative of a decelerated convergence rate in terms of reward value, especially when compared to the trend of the model trained on noise-free data (0\% flipped pairs).
Thus the overall performance of DPO drops significantly.


Despite its effectiveness in mitigating pointwise noise through the adjustment of $\beta$, DPO still suffers from the issue of pairwise noise. As shown in Figure \ref{fig:pairnoise_effect} (Right), the model trained with 40\% flipped data exhibits a significant performance degradation compared to the model trained with noise-free data (0\% flipped). Even when the model is trained with a lower $\beta$ value, it still fails to achieve the same level of performance as the model trained with noise-free data. This observation reveals DPO's vulnerability to pairwise noise, motivating enhancements to its robustness against such interference.
\begin{figure}[t!] 
    \centering 
    \includegraphics[width=0.8\columnwidth]{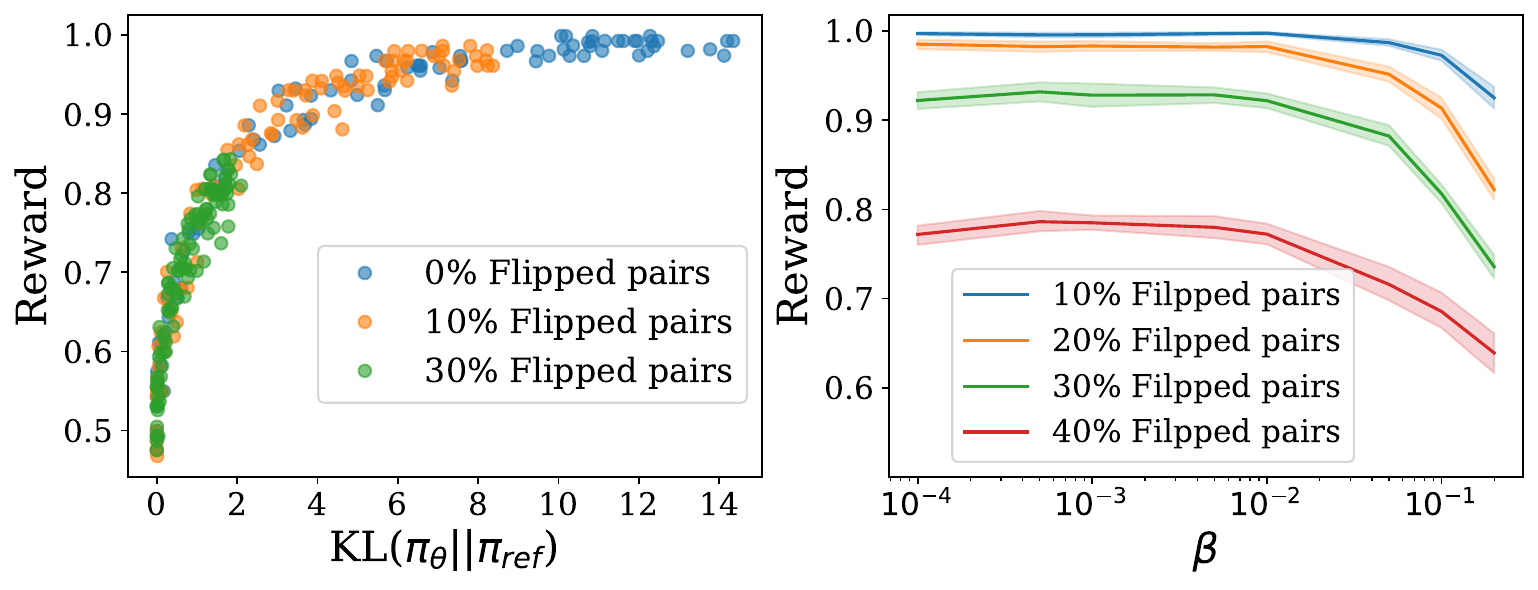} 
    \caption{
    \textbf{Left:} Impact of pairwise noise on the expected reward frontier and KL divergence in DPO ($\beta=0.1$).
    \textbf{Right:} Comparative analysis of the effect of pairwise noise on the expected reward frontier for different $\beta$ values.
    } 
    \label{fig:pairnoise_effect} 
    \vspace{-10pt} 
\end{figure}
\subsection{Distributionally Robustifying DPO}
Building upon the principles of DRO, we introduce the Distributionally Robustifying DPO (Dr. DPO) framework, designed to enhance DPO’s resilience to pairwise noise while preserving its inherent robustness to pointwise noise.
The Dr. DPO objective is formulated as follows:
\begin{equation}
    \begin{aligned}
        \mathop{\max}_{\mathcal{O}^{\prime} }  \mathbb{E}_{(x, y_w, y_l)\sim \mathcal{O}^{\prime} }[h(x,y_w,y_l)] \quad \st \mathbb{D}_{\phi}(\mathcal{O}^{\prime}, \mathcal{O} ) \leq \eta^{\prime}.
    \end{aligned}
    \label{eq:udro}
\end{equation}
Here, $h(x,y_w,y_l)=\log \sigma (r_\phi(x, y_w) - r_\phi(x,y_l))$ denotes the log-likelihood objective of dataset point.
The $\phi$-divergence, denoted as $\mathbb{D}_{\phi}(\mathcal{O}^{\prime},\mathcal{O})$, quantifies the discrepancy between the hypothetical distribution $\mathcal{O}^{\prime} $ and the dataset distribution $\mathcal{O}$. 
Additionally, $\eta^{\prime}$ signifies the robustness radius, which quantifies the degree to which the model can withstand perturbations. 
\begin{restatable}{theorem}{gradientmatchDRDPO}
    \label{thm:Dr. DPO}
    Consider the scenario where the KL divergence is employed to measure the discrepancy between the hypothetical distribution $\mathcal{O}^{\prime}$ and dataset distribution $\mathcal{O}$
    , we derive the ultimate loss function for Dr. DPO as follows:
    
        \begin{equation}
        \begin{aligned}
        \small{ \mathcal{L}_\text{Dr. DPO}(\pi_{\theta}; \piref)  =  - \beta^{\prime} \log \mathbb{E}_{\mathcal{O}}[ \exp(\frac{h_{\text{DPO}}(x,y_w,y_l)}{\beta^{\prime}}) ].}
        \end{aligned}
        \label{eq_DRDPO}
        \end{equation}
        where $h_{\text{DPO}}$ represents the log-likelihood in the DPO framework, defined as:
        \begin{equation}
        \small{h_{\text{DPO}}(x,y_w,y_l)=\log \sigma(\beta \log \frac{\pi_{\theta}(y_w\mid x)}{\piref(y_w\mid x)} - \beta \log \frac{\pi_{\theta}(y_l\mid x)}{\piref(y_l\mid x)})},
        \end{equation}
        with $\beta$ and $\beta^{\prime}$ being regularization coefficient respectively.
\end{restatable}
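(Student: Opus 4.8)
The plan is to recognize Equation~\ref{eq:udro}, under the choice $\mathbb{D}_{\phi}=\mathbb{D}_{\mathrm{KL}}$, as an instance of the classical Kullback--Leibler distributionally robust program whose inner optimization admits the Gibbs variational (exponential-tilting) solution, and then to specialize the generic per-pair objective $h$ to its DPO reparametrization.

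First I would represent the candidate distribution $\mathcal{O}^{\prime}$ through its likelihood ratio $w(x,y_w,y_l)=\mathrm{d}\mathcal{O}^{\prime}/\mathrm{d}\mathcal{O}\ge 0$ against the empirical distribution $\mathcal{O}$, so that the normalization becomes $\mathbb{E}_{\mathcal{O}}[w]=1$, the objective becomes $\mathbb{E}_{\mathcal{O}}[w\,h]$, and the constraint becomes $\mathbb{E}_{\mathcal{O}}[w\log w]\le\eta^{\prime}$. In these variables the objective is linear in $w$ and the feasible set is convex (the map $w\mapsto\mathbb{E}_{\mathcal{O}}[w\log w]$ is convex), so the inner maximization is a convex program for which strong duality holds whenever $\eta^{\prime}>0$, since Slater's condition is met by $w\equiv 1$ (which gives $\mathbb{E}_{\mathcal{O}}[w\log w]=0<\eta^{\prime}$).

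Next I would introduce a multiplier $\beta^{\prime}\ge 0$ for the divergence constraint and $\nu$ for the normalization, and form the Lagrangian $\mathbb{E}_{\mathcal{O}}[w\,h]-\beta^{\prime}(\mathbb{E}_{\mathcal{O}}[w\log w]-\eta^{\prime})-\nu(\mathbb{E}_{\mathcal{O}}[w]-1)$. Setting the pointwise derivative in $w$ to zero yields the exponential-tilted optimizer $w^{\star}\propto\exp(h/\beta^{\prime})$, and imposing $\mathbb{E}_{\mathcal{O}}[w^{\star}]=1$ gives $w^{\star}=\exp(h/\beta^{\prime})/\mathbb{E}_{\mathcal{O}}[\exp(h/\beta^{\prime})]$. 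Substituting $w^{\star}$ back collapses the value of the program to $\beta^{\prime}\log\mathbb{E}_{\mathcal{O}}[\exp(h/\beta^{\prime})]+\beta^{\prime}\eta^{\prime}$, where $\beta^{\prime}\eta^{\prime}$ is an additive constant once $\beta^{\prime}$ is frozen as a hyperparameter. Taking the negative of this robust objective (so that minimizing a loss corresponds to maximizing the robust reward) and discarding the constant produces $-\beta^{\prime}\log\mathbb{E}_{\mathcal{O}}[\exp(h/\beta^{\prime})]$. Finally I would replace the generic $h(x,y_w,y_l)=\log\sigma(r_\phi(x,y_w)-r_\phi(x,y_l))$ by $h_{\mathrm{DPO}}$: inserting the reward reparametrization of Equation~\ref{eq:r_exp} (equivalently Theorem~\ref{thm:cl2dro}) makes the $\beta\log Z(x)$ terms cancel in the reward difference, so that $r_\phi(x,y_w)-r_\phi(x,y_l)=\beta\log\frac{\pi_{\theta}(y_w\mid x)}{\piref(y_w\mid x)}-\beta\log\frac{\pi_{\theta}(y_l\mid x)}{\piref(y_l\mid x)}$, yielding exactly Equation~\ref{eq_DRDPO}.

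The main obstacle I anticipate is not the algebra of the tilted optimizer but the careful justification that exchanging the constrained maximization for its Lagrangian dual is valid and that the stationary $w^{\star}$ is a global maximizer rather than merely a critical point; this rests on the convexity and Slater argument above and on verifying that $\mathbb{E}_{\mathcal{O}}[\exp(h/\beta^{\prime})]$ is finite, which holds here because $h=\log\sigma(\cdot)\le 0$ forces the integrand to be bounded by $1$. A secondary subtlety is the bookkeeping of the sign and the dual role of $\beta^{\prime}$: it is simultaneously the KL-dual multiplier and a free temperature that interpolates between uniform reweighting ($\beta^{\prime}\to\infty$, recovering DPO) and hard selection of high-likelihood pairs ($\beta^{\prime}\to 0$), and I would make explicit that dropping $\beta^{\prime}\eta^{\prime}$ is legitimate precisely because $\beta^{\prime}$ is fixed by the user rather than optimized jointly with $\eta^{\prime}$.
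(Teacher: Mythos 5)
Your proposal is correct and follows essentially the same route as the paper's Appendix B.3 proof: form the Lagrangian of the KL-constrained inner maximization, obtain the exponentially tilted optimizer $\mathcal{O}^{\prime,*}\propto\exp(h/\beta^{\prime})$, collapse the value to the log-sum-exp form $\beta^{\prime}\log\mathbb{E}_{\mathcal{O}}[\exp(h/\beta^{\prime})]$, and then substitute $h_{\text{DPO}}$ for $h$. Your additional remarks on Slater's condition, the finiteness of $\mathbb{E}_{\mathcal{O}}[\exp(h/\beta^{\prime})]$ (since $h\le 0$), and the legitimacy of dropping the $\beta^{\prime}\eta^{\prime}$ constant are sound refinements that the paper leaves implicit.
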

Please check Appendix \ref{appendix_3} for detailed proofs. 
In contrast to the objective function described in Equation~\ref{eq:dro_reward}, our Dr. DPO method specifically targets the pairwise noise present within the dataset.
Rather than assigning a uniform weight of $\frac{1}{N}$ to each instance $(x^{(i)},y_w^{(i)},y_l^{(i)})$ \citep{DPO}, Dr. DPO seeks to reweight these instances by an optimal distribution $\mathcal{O}'$. This approach is driven by the intention to capture the incorrect pairs in the data, thereby enhancing the robustness of the resulting policy.

\begin{restatable}[Upper Bound for Dr. DPO]{theorem}{boundDRDPO}
    \label{thm:DRDPO_bound}
    Let $h_{\text{DPO}} \in [a,b]$ and $\mathcal{L}_{\text{Dr. DPO}}^N$ represents the Dr. DPO loss on $N$ samples. 
    Given a hypothetical distribution $\mathcal{O}^{\prime}$ satisfying $\mathbb{D}_{\text{KL}}(\mathcal{O}^{\prime}, \mathcal{O}) \leq \eta^{\prime}$ to dataset distribution $\mathcal{O}$,
    we have that with probability at least $1-\delta $:
    \begin{equation}
        \mathcal{L}_{\mathcal{O}^{\prime}} \leq \mathcal{L}_{\text{Dr. DPO}}^N + \mathcal{B} (\delta,N, \beta^{\prime}),
    \end{equation}
    where:
    \begin{equation}
         \mathcal{B} (\delta,N, \beta^{\prime}) = \frac{2b\exp\left({(b-a)/\beta'}\right)}{N- 1 + \exp\left({(b-a)/\beta'}\right)} \sqrt{\frac{N}{2} \ln \frac{1}{\delta}}.
    \end{equation}
\end{restatable}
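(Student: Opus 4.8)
The plan is to read Theorem~\ref{thm:DRDPO_bound} as a generalization (uniform-deviation) statement between the empirical Dr. DPO loss and its population counterpart, and to prove it by combining Jensen's inequality with a bounded-differences concentration argument (McDiarmid's inequality). Throughout, write $Z_i = (x^{(i)}, y_w^{(i)}, y_l^{(i)})$ for the $N$ i.i.d.\ draws from $\mathcal{O}$, set $g(Z) = \exp(h_{\text{DPO}}(Z)/\beta')$, and view the empirical loss $\mathcal{L}_{\text{Dr. DPO}}^N = -\beta' \log\!\big(\tfrac1N \sum_{i} g(Z_i)\big)$ as a function $F(Z_1,\dots,Z_N)$ of the sample.

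First I would control the population quantity by the \emph{expected} empirical loss. Because $-\beta'\log(\cdot)$ is convex, Jensen's inequality gives $\mathbb{E}[F] \ge -\beta' \log \mathbb{E}_{\mathcal{O}}[g]$, whose right-hand side is exactly the population Dr. DPO loss. Invoking the Gibbs variational (KL–DRO duality) identity already used to derive Equation~\ref{eq_DRDPO} from the objective in Equation~\ref{eq:udro}, this population loss dominates the reweighted risk against any $\mathcal{O}'$ feasible under $\mathbb{D}_{\text{KL}}(\mathcal{O}',\mathcal{O}) \le \eta'$, so that $\mathcal{L}_{\mathcal{O}'} \le \mathbb{E}[F]$. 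This reduces the theorem to showing $\mathbb{E}[F] \le F + \mathcal{B}(\delta, N, \beta')$ with probability at least $1-\delta$.

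The heart of the argument is a bounded-differences estimate for $F$. Fixing all samples but the $i$-th and differentiating, $\partial F / \partial h_{\text{DPO}}(Z_i) = -\,g(Z_i)/\sum_j g(Z_j)$, i.e.\ the negative softmax weight of sample $i$. Using $h_{\text{DPO}} \in [a,b]$ to bound the numerator by $\exp(b/\beta')$ and the remaining $N-1$ denominator terms from below by $\exp(a/\beta')$, this weight is at most $\exp((b-a)/\beta') / (N-1+\exp((b-a)/\beta'))$. Multiplying this maximal weight by the oscillation of $h_{\text{DPO}}$ (bounded by $2b$) yields, via the mean value theorem, the per-coordinate bound $|F(\dots,Z_i,\dots) - F(\dots,Z_i',\dots)| \le c := \frac{2b\exp((b-a)/\beta')}{N-1+\exp((b-a)/\beta')}$. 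I expect this step --- pinning down the exact sensitivity constant $c$ (and in particular keeping the $N-1$ in the denominator that comes from isolating the $i$-th term, rather than settling for a looser $\exp((b-a)/\beta')/N$) --- to be the main obstacle, since it is what makes $\mathcal{B}$ tight and exposes the dependence on $\beta'$.

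Finally I would invoke McDiarmid's inequality: with the uniform bounded-differences constant $c$ across all $N$ coordinates, $\Pr[\mathbb{E}[F] - F \ge t] \le \exp(-2t^2/(N c^2))$. Setting the right-hand side equal to $\delta$ and solving gives $t = c\sqrt{\tfrac{N}{2}\ln\tfrac1\delta}$, which is precisely $\mathcal{B}(\delta,N,\beta')$. Hence $\mathbb{E}[F] \le F + \mathcal{B}$ with probability at least $1-\delta$, and chaining this with $\mathcal{L}_{\mathcal{O}'} \le \mathbb{E}[F]$ from the second step delivers $\mathcal{L}_{\mathcal{O}'} \le \mathcal{L}_{\text{Dr. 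DPO}}^N + \mathcal{B}$, completing the proof.
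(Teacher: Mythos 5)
Your proposal follows essentially the same route as the paper's proof in Appendix \ref{proof_bound}: bound $\mathcal{L}_{\mathcal{O}^{\prime}}$ by the population Dr.~DPO loss via the KL--DRO (Gibbs) duality, then apply McDiarmid's inequality with exactly the paper's per-coordinate sensitivity constant $c = \frac{2b\exp\left((b-a)/\beta'\right)}{N-1+\exp\left((b-a)/\beta'\right)}$, obtained in the same way by bounding the softmax weight of a single sample against the other $N-1$ terms. The only difference is that you insert an explicit Jensen step to pass from the population loss to $\mathbb{E}[F]$ before invoking McDiarmid --- a bridging step the paper leaves implicit --- which makes your version slightly more careful but does not change the argument.
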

Please check Appendix \ref{proof_bound} for detailed proofs. 
In scenarios involving pairwise noise, consider $\mathcal{O}^{\prime}$ as the ``ideal'' distribution that discerns the correct ranking between pairwise instances accurately. Theorem \ref{thm:DRDPO_bound} suggests the ideal loss relative to $\mathcal{O}^{\prime}$ is upper bounded by the proposed Dr. DPO loss. This bound is achieved when $\mathcal{B} (\delta, N, \beta^{\prime})$ approaches zero, or in other words, the number of samples $N$ approaches infinity. Furthermore, this upper bound can offer guidance in real-world applications. For instance, in conjunction with Lemma \ref{lemma:beta}, we infer that increasing the robustness radius results in a decrease in $\beta^{\prime}$, consequently increasing $\mathcal{B} (\delta, N, \beta^{\prime})$. In such a case, to ensure a tight upper bound, it becomes necessary to enlarge the sample size $N$. This relationship between robustness radius, $\beta^{\prime}$, and $N$ provides insights for guiding the training of LLM models to achieve desired performance.

\subsection{Why is Dr. DPO Robust to Pairwise Noise?}
Our approach extends the analysis presented in \citet{DPO}. To understand the resilience of Dr. DPO to pairwise noise, we examine the gradient of its loss function, denoted $\nabla_\theta \mathcal{L}_\text{Dr. DPO}$:

\vspace{-10pt}
\colorlet{reduce}{red!50!white}
\colorlet{boost}{teal!70!white}
\begin{figure}[h]
\begin{align}
    \nabla_\theta \mathcal{L}_\text{Dr. DPO}(\pi_\theta;\pi_{\text{ref}}) = -\beta\mathbb{E}_{(x, y_w, y_l) \sim \mathcal{O}} 
    [\tikzmarknode{w}{\colorbox{boost!30}{$w(x, y_w, y_l)$}} \, \tikzmarknode{s}{\colorbox{reduce!30}{$\sigma(\hat{r}_\theta(x, y_l) - \hat{r}_\theta (x, y_w))$}} \, (\nabla_{\theta, y_w} - \nabla_{\theta, y_l})], \notag
    \label{eq:gradient}
\end{align}

\begin{tikzpicture}[overlay, remember picture, >=stealth, nodes={align=left,inner ysep=1pt}, <-]
    \path (w.south) ++ (-2.1em, -2.7em) node[anchor=south west, color=boost] (wpart) {\text{\small Reduce incorrect pair's impact.}};
    \draw [boost](w.south) ++ (-2.1em, 0) |- ([xshift=-0.3ex, color=boost]wpart.south east);
    \path (s.south) ++ (-4.7em, -0.3em) node[anchor=north west, color=reduce] (sigma){\text{\small Boost mismatched pair gradients.}};
    \draw [color=reduce](s.south) ++ (-4.7em, 0) |- ([xshift=-0.3ex, color=reduce]sigma.south east);
\end{tikzpicture}
\end{figure}
\vspace{+5pt}

where $\hat{r}_\theta(x, y) = \beta\log\frac{\pi\theta(y|x)}{\pi_\text{ref}(y|x)}$ represents the reward function implicitly learned by the policy $\pi_\theta$, relative to a reference policy $\pi_\text{ref}$. In this framework, 
$\nabla_{\theta, y_w}$ and $\nabla_{\theta, y_l}$ are gradients increasing the probability of the ``chosen'' action $y_w$ and decreasing it for the ``rejected'' action $y_l$, respectively.
The factor $\sigma(\hat{r}_\theta(x, y_l) - \hat{r}_\theta (x, y_w))$ serves to amplify the gradient contributions from mismatched action pairs, which is a principal aspect of the Dr. DPO's design aimed at enhancing learning from comparative feedback. Conversely, the function $w(x, y_w, y_l)$, defined as $w(x, y_w, y_l) = \frac{\exp(h(x,y_w,y_l)/ \beta^{\prime})}{\mathbb{E}_{\mathcal{O}}[\exp(h(x,y_w,y_l)/ \beta^{\prime})]}$ (\cf Appendix \ref{proof_gradient}), acts to mitigate the influence of these incorrect pairings. It achieves this by preferentially weighting correct action pairs over incorrect ones, thus refining the policy update mechanism.
Moreover, the parameter $\beta^{\prime}$ does not require intensive tuning; setting it to a default value of $1$ typically yields stable enhancements. Remarkably, Dr. DPO is straightforward to implement, requiring only an additional line of code with negligible computational overhead.

\rebuttal{
\textbf{A Toy Example of How Dr. DPO Works.}
Consider a training set that contains two samples from both the unperturbed and perturbed datasets. Their corresponding values are assumed to be {$[h(x_1, y_{1,w}, y_{1,l}), h(x_2, y_{2,w}, y_{2,l})] = [-0.1, -1.0]$.} According to the formula \( h(x, y_w, y_l) = \log \sigma(r^+ - r^-) \), the label flip in the second sample leads to \( h(x_2, y_{2,w}, y_{2,l}) < \log \sigma(0)=-0.6931 \). (1) In the case of Empirical Risk Minimization (ERM), the sum is: $-0.1 + (-1.0) = -1.1.$
(2) Under Distributionally Robust Optimization (DRO), when \(\beta' = 0.1\), we have the weights \([w(x_1, y_{1,w}, y_{1,l}), w(x_2, y_{2,w}, y_{2,l})] \approx [2.0, 0.0]\). Therefore, the outcome is:
   $(-0.1 \times 2.0) + (-1.0 \times 0.0) = -0.2,$
   which is greater than $-1.1$. \( w(x, y) \) places more emphasis on the unperturbed sample, making the weight allocation more reasonable and also qualifying as a worst-case distribution.
}



\section{Experiments}
\label{sec:experiments}
In this section, we conduct an empirical assessment of Dr. DPO to evaluate its ability to mitigate noise impacts in preference datasets and to improve performance in noise-free environments. We outline our experimental design, including the datasets used, evaluation metrics, and comparative benchmarks. Our results underscore Dr. DPO's effectiveness, supporting its utility in relevant scenarios.
\subsection{How Well can Dr. DPO Resist the Pairwise Noise?}
\label{exp_hh}
\textbf{Datasets and Setup.}
We conduct experiments on two datasets: IMDB~\citep{imdb} and Anthropic HH~\citep{HH_dataset}. The IMDB dataset is widely utilized for sentiment analysis tasks. The Anthropic HH dataset consists of approximately 170,000 dialogues between humans and automated assistants. 
The objectives of these experiments were twofold: firstly, to evaluate the robustness of the proposed Dr. DPO against pairwise noise; and secondly, to investigate whether Dr. DPO exhibits superior performance on noise-free datasets. To achieve the first objective, we introduce random inversions between selected and rejected responses in the training data at varying noise levels—specifically, with probabilities of 10\%, 20\%, 30\%, and 40\%. For the second objective, we benchmark Dr. DPO against other DPO-related baselines to discern its relative advantages. \rebuttal{Unless stated otherwise, all experiments in this study were conducted using the Pythia-2.8B model~\citep{Pythia}.}
Please refer to Appendix \ref{appendix_exp} for more details.

\label{sec:noisy_dataset}
\begin{table}[t]
    \centering
    \caption{Preference accuracy and win-rate comparison on the Anthropic HH dataset with various levels of noise. The columns indicate the performance on both the noise-free dataset and the datasets with inverted response labels, denoted by their respective flip ratios. The performance improvements of cDPO, IPO, and Dr. DPO over DPO are also presented.}
    \resizebox{1.0\columnwidth}{!}{
    \begin{tabular}{@{}c|lllll|ll}
    \toprule
        
    \multirow{2}{*}{\textbf{Models}} &  \multicolumn{5}{c}{\textbf{Preference Accuracy}} &\multicolumn{2}{|c}{\textbf{Win Rate}} \\
     & {0\% Flipped} & 10\% Flipped & 20\% Flipped & 30\% Flipped & 40\% Flipped & 0\% Flipped & 40\% Flipped \\
    \midrule
    {DPO} &   $63.63$   & $62.27$		&	$61.28$		&	$58.54$		&	$55.23$	 & $54.36$ & $49.00$ \\
    {cDPO} &  $63.48^{\color{-}-0.23\%}$   & $62.67^{\color{+}+ 0.64\%}$		&	$61.48^{\color{+}+ 0.32\%}$		&	$58.69^{\color{+}+ 0.27\%}$		&	$55.31^{\color{+}+ 0.15\%}$	& $51.46^{\color{-}-5.33\%}$ & $45.68^{\color{-}-6.78\%}$ \\
    {IPO} & $65.95^{\color{+}+ 3.64\%}$  & $64.82^{\color{+}+ 4.10\%}$		&	$63.52^{\color{+}+3.65\%}$		&	$61.45^{\color{+}+4.98\%}$		&	$56.64^{\color{+}+ 2.55\%}$	& $50.81^{\color{-}-6.53\%}$ & $54.82^{\color{+}+11.88\%}$ \\
    {rDPO} & $64.37^{\color{+}+ 1.16\%}$  & $62.72^{\color{+}+ 0.72\%}$		&	$62.53^{\color{+}+2.04\%}$		&	$60.56^{\color{+}+3.45\%}$		&	$57.11^{\color{+}+ 3.40\%}$	& $52.15^{\color{-}-4.06\%}$ & $53.54^{\color{+}+9.26\%}$ \\
    \midrule
    {Dr. DPO} & $\textbf{66.22}^{\color{+}+ 4.07\%}$	&	$\textbf{65.38}^{\color{+}+ 5.00\%}$		&	$\textbf{64.19}^{\color{+}+ 4.74\%}$		&	$\textbf{62.65}^{\color{+}+ 7.02\%}$		&	$\textbf{58.83}^{\color{+}+ 6.52\%}$	& $\textbf{56.67}^{\color{+}+ 4.25\%}$ & $\textbf{61.65}^{\color{+}+25.81\%}$ \\
    \bottomrule
    \end{tabular}
    }
    \vspace{-0.6cm}
    \label{tab:comparison}
\end{table}

\textbf{Baselines.}
We compare Dr. DPO with four baseline methods: (i) The standard DPO, which is the state-of-the-art (SOTA) method for directly optimizing the policy and reward model; (ii) Conservative DPO (cDPO \citep{cDPO}), a variant introduced by the authors of DPO to address scenarios with probabilistically flipped labels by incorporating a binary cross-entropy (BCE) loss; and (iii) IPO ~\citep{ipo}, an innovative approach that enables learning directly from preferences, bypassing both the reward modeling phase and reliance on the BT model. (iv) rDPO \citep{rDPO} is a variant of DPO that de-biases the effect of preference noise and makes the policy robust.

\textbf{Metrics.} We adopt two metrics, \emph{Preference Accuracy}, and \emph{Win-Rate}, in the experiments. 
The \emph{Preference Accuracy} measures the proportion of test instances from the Anthropic HH dataset where the model's predicted reward for the preferred response, $r(x,y_w)$, exceeds that of the less preferred alternative, $r(x,y_l)$. We further draw on the evaluation framework introduced by \citet{DPO} to calculate the \emph{Win Rate}. This metric assesses how often the GPT-4 model selects a response generated by our model rather than the chosen response within the dataset. The Win-Rate computation is specifically designed for the single-turn dialogue portion of HH dataset's test subset.

\textbf{Dr. DPO Outperforms Across Noise Levels.}
Table \ref{tab:comparison} illustrates the effect of noise on preference accuracy, showing a decrease from 63.63\% to 55.23\% as the label flip ratio rises from 0\% to 40\%. This indicates a deteriorated ability to distinguish between responses with increasing noise. In a noise-free environment, cDPO achieves a preference accuracy of 63.48\%, which falls to 55.31\% at a 40\% noise level, highlighting the limited impact of BCE loss adjustments. In comparison, the IPO method consistently outperforms DPO by an average of 3.78\% in accuracy, proving its efficacy. rDPO reports a 3\% accuracy gain over DPO. Notably, Dr. DPO exhibits outstanding noise resistance, achieving the highest preference accuracies of 66.22\% in the absence of noise and 58.83\% with 40\% noise, superior to DPO, cDPO, IPO, and rDPO.


\textbf{Dr. DPO Surpasses Other Methods.}
We evaluated the response quality of various models—DPO, cDPO, IPO, rDPO, and Dr. DPO—across different noise levels. Table \ref{tab:comparison} shows that in a noiseless setting, DPO leads with a 56.67\% win rate. However, as noise increases, both DPO and cDPO's performance declines, whereas Dr. DPO excels, achieving a 61.65\% win rate. 

\textbf{Dr. DPO's Performance in the 0\% Flipped Case.}
The 0\% flipped case indicates no intentional label flips were introduced, representing the original dataset. Nevertheless, this does not mean the dataset is free of label noise. Dr. DPO continues to outperform DPO and IPO in this scenario, suggesting the presence of inherent label noise in the dataset. This observation aligns with rDPO's findings \citep{rDPO}, where a default flip rate of 0.1 improved performance on the HH dataset, further justifying the design of Dr. DPO.

\subsection{Comparing Dr. DPO with Baselines on MT-Bench}
To evaluate the generation quality of DPO and its variants, we conduct pairwise comparisons using the MT-Bench framework \citep{mt_bench}.
This framework, grounded in GPT-4, reliably aligns with human evaluative preferences, exhibiting an agreement rate exceeding 80\% on the quality of outputs from LLMs. Adhering to the established MT-Bench guidelines \citep{mt_bench} \footnote{\url{https://github.com/lm-sys/FastChat/blob/main/fastchat/llm_judge/gen_model_answer.py}}, our approach involves generating model responses at a controlled temperature of 0.7 and restricting the token count to a maximum of 1024. We systematically compare the outputs from the base DPO model and its variants, which have been fine-tuned with 0\% and 40\% flipped pairs on the HH dataset. 


\begin{figure}
    \centering
    \includegraphics[width=0.47\textwidth]{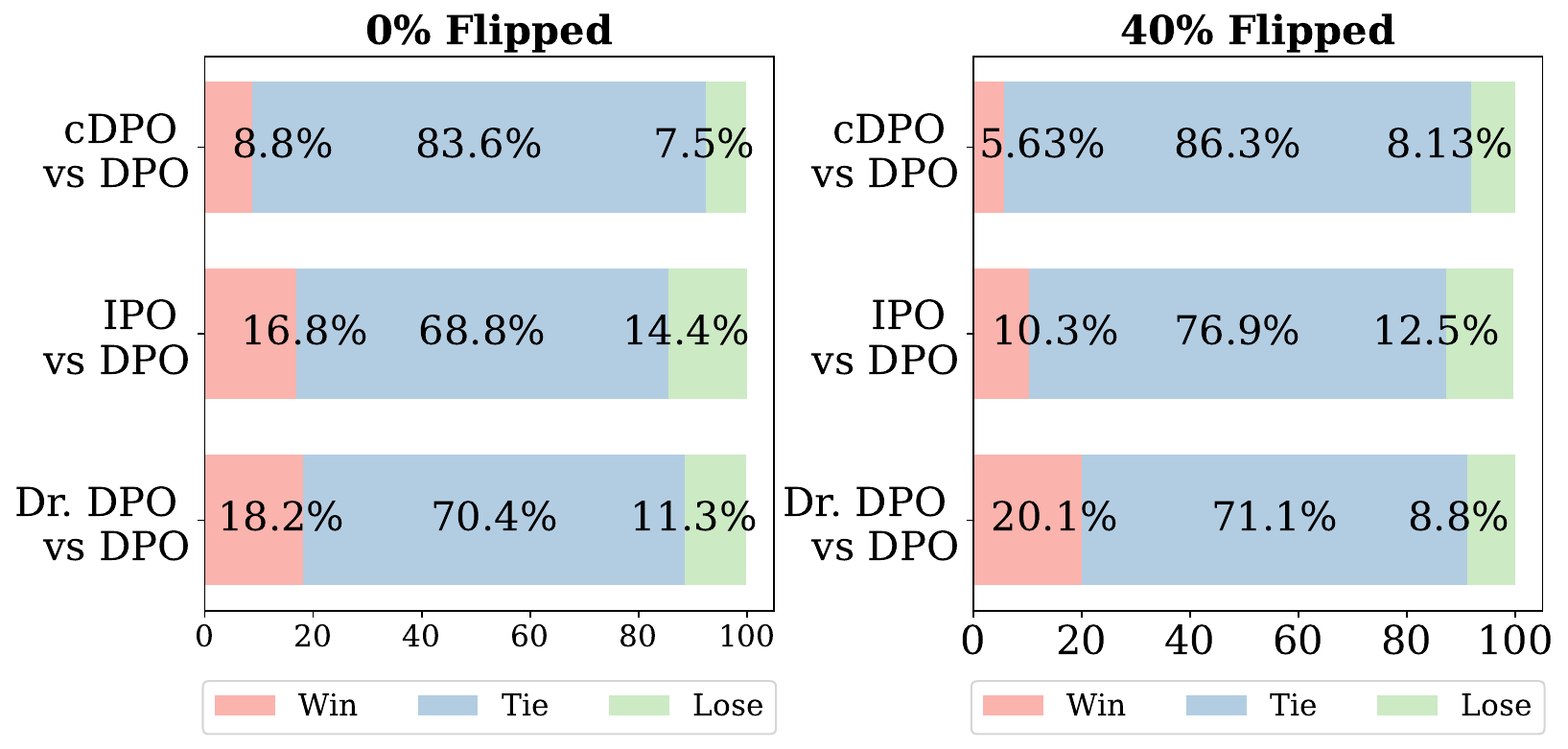}
    \includegraphics[width=0.52\textwidth]{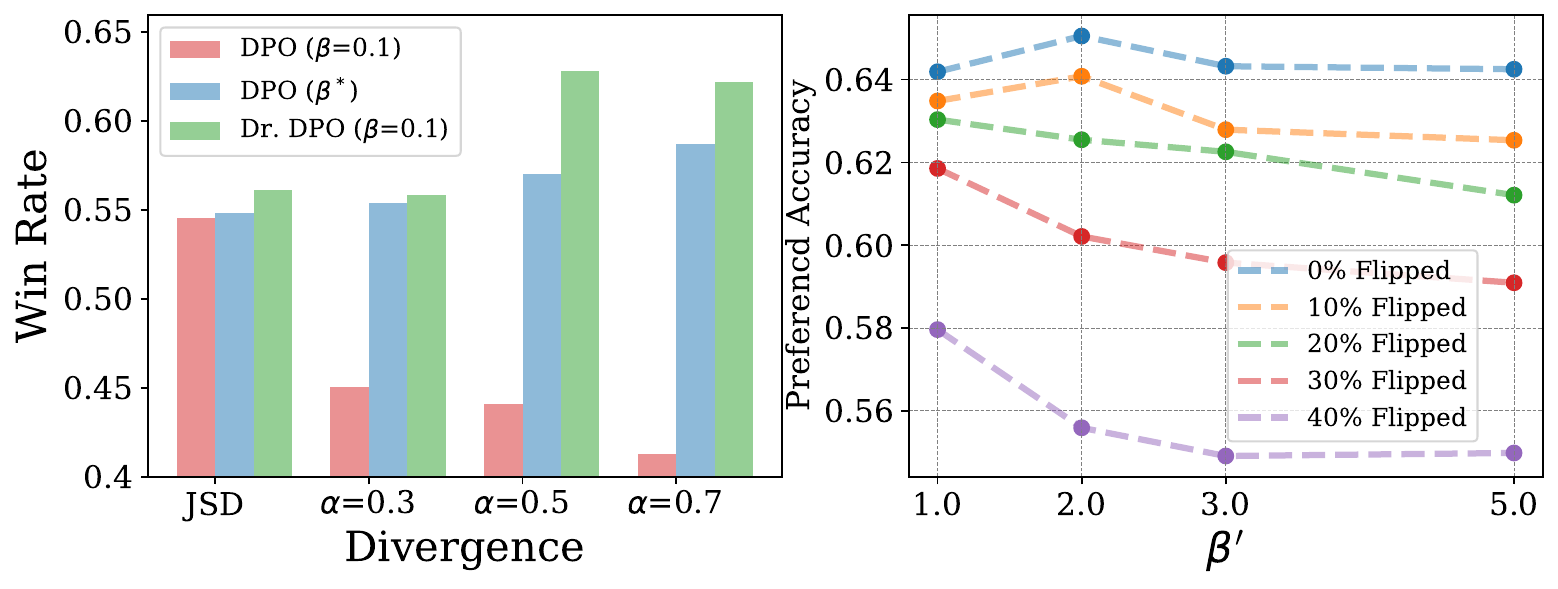}
    \caption{
    MT-Bench evaluates DPO and its variants using GPT-4, showing Win, Tie, and Loss rates at 0\% and 40\% pairwise noise levels in Figures 1-2. Figure 3 illustrates Dr. DPO's win rate across different $\phi$-divergences, while Figure 4 presents its Preference accuracy for varying $\beta^{\prime}$ values.
    }
    \label{fig:dialogue-main1}
\end{figure}


Figure \ref{fig:dialogue-main1} (1,2) shows that Dr. DPO consistently outperforms DPO in both noise-free and noisy datasets, becoming the only method to exceed DPO's performance in the MT-Bench evaluation. While IPO slightly improves over DPO in the noise-free dataset, it underperforms in the noisy dataset where DPO prevails. In contrast, Dr. DPO demonstrates robust, significant enhancements in both conditions, highlighting its superior ability to generate high-quality responses.

\begin{figure*}[t] 
    \centering 
    \vspace{-10pt} 
    \includegraphics[width=\columnwidth]{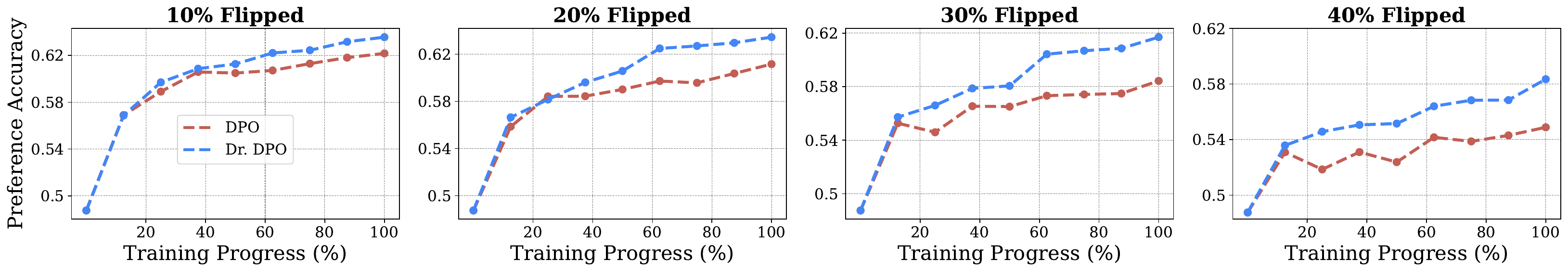}
    \vspace{-10pt} 
    \caption{Preference accuracy across training steps for different levels of pairwise noise. 
    } 
    \label{fig:trainin_step} 
    \vspace{-15pt} 
\end{figure*}


\subsection{Ablation Studies on Dr. DPO}
We conduct ablation studies to investigate the impact of the $\phi$-divergence and $\beta^{\prime}$ on the performance of Dr. DPO, and provide the convergence analysis. 
More ablation studies are listed in Appendix \ref{appendix_exp}. 

\textbf{Evaluating the Impact of $\phi$-divergence on Dr. DPO.} Figure \ref{fig:dialogue-main1} (3) explores Dr. DPO's performance with various $\phi$-divergences, including Jensen-Shannon (JS) and $\alpha$-divergence. Demonstrated results indicate that Dr. DPO consistently outperforms the baseline DPO when $\beta^{\prime}$ is set to 1.0, serving as a viable default without requiring further adjustments. This is in contrast to the baseline $\beta=0.1$ setting, where although improvements can be realized by manually tuning $\beta^*$, the process becomes time-consuming and impractical for regular use.


\textbf{Evaluating the Impact of $\beta^{\prime}$.} Figure \ref{fig:dialogue-main1} (4) illustrates how varying $\beta^{\prime}$ across different noise levels affects preference accuracy. The experimental results reveal a trend wherein increased noise levels correspond to a reduced optimal value for $\beta^{\prime}$, which is consistent with our theoretical analysis provided in~Section~\ref{Motivation}. Consequently, we propose a default setting of $\beta^{\prime}=1.0$ for balancing accuracy and robustness in the presence of noise.

\textbf{Convergence Analysis.} 
Figure \ref{fig:trainin_step} shows that Dr. DPO not only converges faster but also surpasses DPO in the early training stage, attributed to its superior management of flipped noisy pairs. This enhancement meets our goal of boosting DPO's robustness in noisy environments.

\wjk{
\textbf{Dr. DPO does not increase KL divergence.} A potential concern is that Dr. DPO might improve performance by increasing the KL divergence. To address this, we computed the KL divergence of models trained under two noise ratios: 0\% and 40\%. As shown in Table~\ref{table:kl_divergence}, the KL divergence does not increase with DRDPO for the same $\beta$ settings. This comparison demonstrates that DRDPO maintains a stable KL divergence, effectively mitigating the concern that performance gains are achieved through increased divergence.
}
\wjk{
\begin{table}[t]
    \centering
    \begin{minipage}{0.52\textwidth}
        \centering
        \caption{Comparison of KL Divergence and Win Rate of Model Performance at Different Noise Ratios}
        \resizebox{\columnwidth}{!}{
        \begin{tabular}{@{}c|l|ll|c}
            \toprule
            Noise Ratio & Loss Function & Win Rate & KL  & $\beta$ \\
            \midrule
            \multirow{2}{*}{\textbf{0\%}}  & DPO & 54.36 & 19.68 & 0.1 \\
                  & Dr.DPO & 56.67 & 20.01 & 0.1 \\
            \midrule
            \multirow{2}{*}{\textbf{40\%}}  & DPO & 49.00 & 8.32  & 0.1 \\
                  & Dr.DPO & 61.65 & 7.96 & 0.1  \\
            \bottomrule
        \end{tabular}
        }
        \label{table:kl_divergence}
    \end{minipage}
    \hfill
    \begin{minipage}{0.47\textwidth}
        \centering
        \footnotesize
        \caption{Comparison of Model Performance via Win Rate on the LLaMA2-13B.}
        \resizebox{0.9\columnwidth}{!}{
        \begin{tabular}{@{}c|l|l}
            \toprule
             & Loss Function & Win Rate  \\
            \midrule
            \multirow{4}{*}{Llama2-13b}  & DPO & 59.05 \\
            & cDPO & 53.52 \\
            & rDPO & 60.54 \\
            & Dr.DPO & 63.67 \\
            \bottomrule
        \end{tabular}
        }
        \label{table:llama2-13b}
    \end{minipage}
    \vspace{-10pt}
\end{table}
\textbf{Comparing Dr. DPO with Baselines on LLaMA2-13B.} We further evaluate the performance of Dr. DPO on the LLaMA2-13B dataset, which is a large-scale language model. Table \ref{table:llama2-13b} shows that Dr. DPO outperforms DPO, cDPO, and rDPO, achieving a win rate of 63.67\% compared to 59.05\% for DPO, 53.52\% for cDPO, and 60.54\% for rDPO. These results demonstrate the superior performance of Dr. DPO in generating high-quality responses on the LLaMA2-13B dataset.
}
\section{Discussion}
\textbf{Conclusion.}
In this study, we analyze DPO's robustness from a DRO perspective, highlighting its resilience to pointwise noise. We establish a link between DPO's regularization and DRO's robustness, showing that a smaller regularization parameter $\beta$ enhances stability against uncertain data. Our experiments confirm the crucial role of $\beta$ in noise resistance but uncover DPO's weakness against pairwise noise. To address this, we introduce a novel Distributionally Robustifying  DPO framework with an additional parameter $\beta^{\prime}$ that balances data pair importance in training to enhance model robustness. 
The Dr. DPO's fine-tuning of exploration and exploitation could markedly improve the alignment of language models, assuring reliable performance in the presence of real-world noise.

\textbf{Limitations and future work.}
The current work introduces Dr. DPO, an enhancement to DPO that addresses label flipping noise in training datasets through an additional hyperparameter $\beta'$. Despite the robust performance indicated by empirical results with a default $\beta'$ value of 1.0, the need for parameter tuning in different applications remains. The sensitivity of $\beta'$ to data and task specifics may require a search process to fully leverage Dr. DPO's potential. Additionally, we aim to explore adaptive mechanisms that allow Dr. DPO to adjust robustness dynamically during training, reducing the need for manual parameter tuning and enhancing its practical applicability across diverse tasks.
\subsubsection*{Acknowledgments}
This research is supported by the National Science and Technology Major Project (2023ZD0121102), National Natural Science Foundation of China (92270114, U24B20180, 62121002, 62302321). This research was also supported by the advanced computing resources provided by the Supercomputing Center of the USTC.



\bibliography{iclr2025_conference}
\bibliographystyle{iclr2025_conference}

\appendix

\section{Related Work}
\label{related_work}
\textbf{Reinforcement Learning from Human Feedback.} RLHF \citep{ChristianoLBMLA17, HH_dataset, llama2, instructGPT} has emerged as a key method for aligning language models with human values and preferences, mitigating the generation of biased or factually incorrect outputs. Compared to supervised learning, RLHF is rather complex, less stable, and requires more memory resources. These challenges have motivated the development of alternatives to the RLHF pipeline. For example, RAFT \citep{RAFT} uses an existing reward model to select the best set of training samples based on the model outputs, while RRHF \citep{RRHF} leverages a much simpler ranking loss to align human preferences and retain the performance of PPO. DPO \citep{DPO} is another alternative to RLHF that uses a preference loss function to directly optimize the LLMs, and has been shown to be more stable and less computationally intensive than RLHF. 
Despite these efforts, all the methods ignore the noise in the training data, which can lead to suboptimal performance. 
\rebuttal{
In a recent advancement, WPO has been introduced as a novel strategy to simulate on-policy learning with off-policy preference data, effectively addressing the distributional gap problem and enhancing the optimization process without incurring additional costs \citep{WPO}.
}

\textbf{Distributionally Robust Optimization.}
DRO differs from traditional robust optimization methods \citep{jin2020sampling, lian2020personalized, wu2023influence} by minimizing the worst-case error within an uncertainty set defined by constraints like $\phi$-divergence \citep{namkoong2017variance, duchi2018learning}, Wasserstein distance \citep{shafieezadeh2015distributionally, sinha2017certifying, huang2022coresets}, and shape \citep{lam2021orthounimodal, chen2021discrete}. \citep{michel2021modeling, michel2022distributionally} introduced parametrization to the uncertainty set for greater architectural flexibility. Separately, \citep{zhai2021doro} addressed sensitivity to outliers in DRO, diverging from the other studies. \rebuttal{And \citep{OrenSHL19} introduced a novel approach to language modeling that enhances robustness by optimizing against the worst-case topic mixture, aligning with the overarching theme of minimizing maximum losses within predefined uncertainty sets.}
\section{Appendix of Proofs}
\label{proofs_appendix}
\subsection{Proof of Theorem \ref{thm:cl2dro}}
\label{appendix_1}
\gradientmatch*
\begin{proof}
\begin{definition}[$\phi$-divergence \citep{nguyen2010estimating}]
    \label{def:phi_divergence}
    For any convex function $\phi$ with $\phi(1)=0$, the $\phi$-divergence between $Q$ and $Q_0$ is:
    \begin{equation}
       D_{\phi}(\pitheta,\piref) \coloneqq \mathbb{E}_{\piref}[\phi(\frac{\pitheta}{\piref})]
    \end{equation}
    where $D_{\phi}(Q,Q_0)=\infty $ if $Q$ is not absolutely continuous with respect to $Q_0$. Specially, when $\phi(x)=x\log x - x + 1$, $\phi$-divergence degenerates to the well-known KL divergence.
    \end{definition}
 \begin{definition}[Convex conjugate \citep{hiriart2004fundamentals}]
    \label{def:Convex}
    We consider a pair $(A, B)$ of topological vector spaces and a bilinear form $\langle \cdot, \cdot \rangle \to \mathbb{R} $ such that $(A, B, \langle \cdot, \cdot \rangle)$ form a dual pair. For a convex function $f: \mathbb{R} \to \mathbb{R}$, $dom f\coloneqq \{ x \in \mathbb{R}: f(x)<\infty \}$ is the effective domain of $f$. The convex conjugate, also known as the Legendre-Fenchel transform, of $f:A\to \mathbb{R}$ is the function $f^*:B\to \mathbb{R}$ defined as
    \begin{equation}
       f^*(b) = \sup_{a}\{ab - f(a)\}, \quad b \in B
    \end{equation}
 \end{definition}
 \begin{theorem}[Interchange of minimization and integration \citep{ben2007old}]
  \label{thm_rearange}
  
  Let $(\Omega,\mathcal{F}) $ be a measurable space equipped with $\sigma $-algebra $\mathcal{F}$, $L^p(\Omega, \mathcal{F}, P)$ be the linear space of measurable real valued functions $f:\Omega\to \mathbb{R}$ with $||f||_p < \infty$, and let $\mathcal{X} \coloneqq L^p(\Omega, \mathcal{F}, P)$, $p\in[1, +\infty]$. Let $g:\mathbb{R}\times \Omega \to \mathbb{R}$ be a normal integrand, and define on $\mathcal{X}$. Then,
  \begin{equation}
     \operatorname*{min}_{x\in \mathcal{X}} \int_{\Omega}  g(x(\omega ), \omega)\,dP(\omega) = \int_{\Omega} \operatorname*{min}_{s \in \mathbb{R} }  g(s,\omega)\,dP(\omega)
  \end{equation}
 \end{theorem}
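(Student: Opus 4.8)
The plan is to establish the identity by the standard two-inequality argument, treating the ``$\geq$'' direction as essentially immediate and concentrating the real work on the ``$\leq$'' direction through a measurable selection of near-minimizers. Throughout I would write $m(\omega) := \min_{s \in \mathbb{R}} g(s,\omega)$ for the pointwise optimal value. A preliminary step is to verify that $m$ is itself $\mathcal{F}$-measurable so that the right-hand integral is well defined; this is one of the defining consequences of $g$ being a normal integrand, since the infimal projection (marginal function) of a normal integrand along the decision variable $s$ is measurable.

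For the easy inequality I would argue pointwise: for every $x \in \mathcal{X}$ and every $\omega \in \Omega$ we have $g(x(\omega),\omega) \geq m(\omega)$ by definition of the minimum over $s$. Integrating this bound against $P$ and then taking the infimum over $x \in \mathcal{X}$ on the left yields
\[
  \inf_{x \in \mathcal{X}} \int_\Omega g(x(\omega),\omega)\,dP(\omega) \;\geq\; \int_\Omega m(\omega)\,dP(\omega).
\]
The substantive direction is ``$\leq$'', where I need feasible elements of $\mathcal{X}$ whose objective approaches $\int_\Omega m\,dP$. Fixing $\epsilon > 0$, the key is to produce a measurable selection $\bar{x}\colon \Omega \to \mathbb{R}$ with $g(\bar{x}(\omega),\omega) \leq m(\omega) + \epsilon$ for $P$-a.e.\ $\omega$. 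Existence of such a selection is precisely where the normal-integrand hypothesis does the heavy lifting: the level-set multifunction $\omega \mapsto \{\, s : g(s,\omega) \leq m(\omega) + \epsilon \,\}$ is closed-valued and measurable, so the Kuratowski--Ryll-Nardzewski measurable selection theorem furnishes a measurable $\bar{x}$. Since $\bar{x}$ need not lie in $L^p = \mathcal{X}$, I would repair feasibility using the decomposability of $L^p$: on the sets $\{\,|\bar{x}| \leq k\,\}$ keep $\bar{x}$ and outside them splice in a fixed reference element of $\mathcal{X}$, producing truncated selections $x_k \in \mathcal{X}$; by monotone/dominated convergence the objectives $\int_\Omega g(x_k,\omega)\,dP$ converge to $\int_\Omega g(\bar{x},\omega)\,dP \leq \int_\Omega m\,dP + \epsilon$. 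Letting $k \to \infty$ and then $\epsilon \to 0$ gives the reverse inequality, and combining with the display completes the proof.

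The main obstacle I anticipate is the coupling of two demands in the hard direction: the selection $\bar{x}$ must be simultaneously measurable and extractable into the constraint set $\mathcal{X} = L^p$. Measurability alone follows cleanly from the normal-integrand structure, and the ``$\geq$'' inequality requires no regularity whatsoever, so the delicacy of the argument is concentrated entirely in the truncation/decomposition step that restores $L^p$-feasibility without destroying near-optimality. A secondary subtlety is the case where $g(\cdot,\omega)$ is not bounded below on a positive-measure set (so that $m(\omega) = -\infty$ and the integral diverges): there I would replace the target level $m(\omega)+\epsilon$ by $\max\{m(\omega)+\epsilon,\,-1/\epsilon\}$ in the selection, driving both sides to $-\infty$ consistently. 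If one instead wanted an exact minimizer rather than the value identity, the same machinery applies to the argmin multifunction, but integrability of the exact minimizer would then have to be assumed or verified separately; for the value statement as worded, the $\epsilon$-approximation route above sidesteps that issue.
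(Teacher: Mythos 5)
The paper contains no proof of this statement to compare against: Theorem \ref{thm_rearange} is imported verbatim from the cited source \citep{ben2007old} and used as a black box inside the proof of Theorem \ref{thm:cl2dro} (to justify pulling the maximization over $L(y|x)$ inside the expectation). What you have written is, in architecture, the standard proof of the interchange theorem for decomposable spaces (Rockafellar and Wets, \emph{Variational Analysis}, Theorem 14.60): measurability of the marginal $m(\omega)=\inf_{s}g(s,\omega)$ from the normal-integrand property, the trivial pointwise bound for the ``$\geq$'' direction, and, for ``$\leq$'', a Kuratowski--Ryll-Nardzewski selection from the closed-valued measurable $\epsilon$-argmin multifunction followed by a splicing step that exploits decomposability of $L^p$. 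Your fix for the case $m(\omega)=-\infty$ via the truncated target $\max\{m(\omega)+\epsilon,\,-1/\epsilon\}$ is also the standard one. So your route is the correct and essentially canonical one for this result; the paper simply never walks it.

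One genuine gap remains, inherited from the loose statement itself: your splicing step silently requires a properness hypothesis. When you pass from $\bar{x}$ to $x_k=\bar{x}\,\mathbf{1}_{\{|\bar{x}|\le k\}}+x_0\,\mathbf{1}_{\{|\bar{x}|>k\}}$, the convergence $\int_{\{|\bar{x}|>k\}}g(x_0(\omega),\omega)\,dP\to 0$ needs the fixed reference $x_0\in\mathcal{X}$ to satisfy $\int \max\{g(x_0(\omega),\omega),0\}\,dP<\infty$, and if no such $x_0$ exists the identity is simply false: take $g(s,\omega)=0$ if $s=h(\omega)$ and $+\infty$ otherwise, with $h$ measurable but not in $L^p$; then the right-hand side is $0$ while the left-hand side is $+\infty$, and no truncation repairs the selection $\bar{x}=h$ because every splice incurs an infinite penalty off $\{|\bar{x}|\le k\}$. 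The original references state the theorem with exactly this qualifier (the left-hand infimum must not be identically $+\infty$), and your proof should make it explicit where dominated convergence is invoked. In the paper's application the hypothesis is harmless---the integrand there is finite-valued in $L$ (a linear reward term minus $\phi(L)$ with finite rewards), so any bounded likelihood ratio yields a finite objective---which is presumably why the paper's restatement drops it, but a self-contained proof cannot.
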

 To ease the derivation, we denote the likelihood ratio $L(y|x)=\pitheta(y|x)/\piref(y|x)$. Note that the $\phi$-divergence between $\pitheta$ and $\piref$ is constrained, and thus $L(.)$ is well-defined. For brevity, we usually short $L(y|x)$ as $L$. And in terms of Definition \ref{def:phi_divergence} of $\phi$-divergence, the expression of RM-DRO (\cf Equation \eqref{eq:dro_reward}) becomes:
   \begin{equation}
       \mathcal{L}_{\text{RM-DRO}}^{\phi} = \operatorname*{max}_{L}  \mathbb{E}_{x \sim \mathcal{D}, y \sim \piref}[r_\theta(y|x) L] \qquad \st  \mathbb{E}_{\piref}[{\phi}(L(y|x))] \leq \eta 
   \end{equation}
   Note that $\mathbb{E}_{\piref}[r_\theta(y|x) L]$ and $\mathbb{E}_{\piref}[{\phi}(L(y|x))]$ are both convex in $L$. We use the Lagrangian function solver:
   \begin{equation}
       \begin{aligned}
           \mathcal{L}_{\text{RM-DRO}}^{\phi} & =  \operatorname*{min}_{\beta\geq 0, \alpha} \operatorname*{max}_{L(y|x)} \{ \mathbb{E}_{x \sim \mathcal{D}, y \sim \piref}[r_\theta(y|x) L(y|x)] -  \beta [ \mathbb{E}_{\piref}[{\phi}(L(y|x))] -\eta] + \alpha (\mathbb{E}_{\piref} [L(y|x)]  - 1) \}\\
           & = \operatorname*{min}_{\alpha\geq 0, \beta}  \big \{
           \beta \eta - \alpha + \beta \operatorname*{max}_{L(y|x)} \{ \mathbb{E}_{x \sim \mathcal{D}, y \sim \piref}[\frac{r_\theta (y|x) + \alpha}{\beta} L(y|x) - \phi(L(y|x))]  \} \big \}\\
           & =  \operatorname*{min}_{\beta\geq 0, \alpha}  \big \{
           \beta \eta - \alpha + \beta \mathbb{E}_{x \sim \mathcal{D}, y \sim \piref}[\operatorname*{max}_{L(y|x)} \{ \frac{r_\theta(y|x) + \alpha}{\beta} L(y|x)  - \phi(L(y|x)) \}]   \big \}\\
           & = \operatorname*{min}_{\beta \geq 0, \alpha}  \big \{
           \beta \eta - \alpha + \beta \mathbb{E}_{x \sim \mathcal{D}, y \sim \piref}[\phi^* ( \frac{r_\theta(y|x) + \alpha}{\beta} ) ]   \big \}\\
       \end{aligned}
       \label{kl_dro_appendix_1}
   \end{equation}
   The first equality holds due to the strong duality \citep{boyd2004convex}. 
   The second equality is a re-arrangement for optimizing $L(y|x)$. 
   The third equation follows by the Theorem \ref{thm_rearange}.
   The last equality is established based on the definition of convex conjugate \ref{def:Convex}. 
   When we choose KL-divergence, we have $\phi_{\text{KL}}(x)=x\log x - x + 1 $. It can be deduced that $\phi_{\text{KL}}^*(x)=e^x-1$. Then, we have:
   \begin{equation}
    \label{eq:kl_dro_rm}
       \begin{aligned}
           \mathcal{L}_{\text{RM-DRO}}^{KL} & = \operatorname*{min}_{\beta \geq 0, \alpha}  \big \{
            \beta \eta - \alpha + \beta \mathbb{E}_{x \sim \mathcal{D}, y \sim \piref}[\phi^* ( \frac{r_\theta(y|x) + \alpha}{\beta} ) ]   \big \}\\
            & = \operatorname*{min}_{\beta \geq 0, \alpha}  \big \{
            \beta \eta - \alpha + \beta \mathbb{E}_{x \sim \mathcal{D}, y \sim \piref}[\exp( \frac{r_\theta(y|x) + \alpha}{\beta} )-1 ]   \big \}\\
       \end{aligned}
   \end{equation}
   and the maximum of term $ L(y|x)$ in Equation \eqref{kl_dro_appendix_1} is achieved when 
   \begin{equation}
    L(y|x)=\exp(\frac{r_\theta(y|x) + \alpha}{\beta}).
   \end{equation}
   We differentiate the Lagrangian function \wrt $\alpha$ and set it to zero:
      \begin{equation}
        \begin{aligned}
            \frac{\partial  }{\partial  \alpha} \big \{ \beta \eta - \alpha + \beta \mathbb{E}_{x \sim \mathcal{D}, y \sim \piref}[\exp( \frac{r_\theta(y|x) + \alpha}{\beta} )-1 ]   \big \} =0
        \end{aligned}
      \end{equation}
        Then, we have:
        \begin{equation}
            \alpha^* = - \beta \log \mathbb{E}_{x \sim \mathcal{D}, y \sim \piref}[\exp( \frac{r_\theta(y|x) }{\beta} ) ]
        \end{equation}
    Substituting the optimal $\alpha^*$ into the Lagrangian function \eqref{eq:kl_dro_rm}, we have:
    \begin{equation}
        \begin{aligned}
            \mathcal{L}_{\text{RM-DRO}}^{KL} & = \operatorname*{min}_{\beta \geq 0, \alpha}  \big \{
                \beta \eta - \alpha + \beta \mathbb{E}_{x \sim \mathcal{D}, y \sim \piref}[\exp( \frac{r_\theta(y|x) + \alpha}{\beta} )-1 ]   \big \}\\
                & = \operatorname*{min}_{\beta \geq 0}  \big \{ 
                    \beta \eta + \beta \log \mathbb{E}_{x \sim \mathcal{D}, y \sim \piref}[\exp( \frac{r_\theta(y|x) }{\beta} ) ] \big\} \\
                &=\beta^*(\eta) \log \mathbb{E}_{x \sim \mathcal{D}, y \sim \piref}[\exp(\frac{r_{\text{DPO}}(x,y)}{\beta^*(\eta)})] + C,
        \end{aligned}
    \end{equation}
    where $\beta^*(\eta)$ signifies the optimal value of $\beta$ that minimizes the Lagrangian function and $C = \beta \eta$ .
    Besides, if we plug the optimal $\alpha^*$ into the optimal $L(y|x)$, we have:
    \begin{equation}\label{eq_l_star}
        \begin{aligned}
            L^*(y|x) &= \exp(\frac{r_\theta(y|x) + \alpha^*}{\beta^*}) \\
           & = \exp(\frac{r_\theta(y|x) }{\beta^*}) \frac{1}{Z(x)} \\
        \end{aligned}
    \end{equation}
    where $Z(x) = \mathbb{E}_{x \sim \mathcal{D}, y \sim \piref}[\exp( \frac{r_\theta(y|x) }{\beta} ) ]$. Here, we rearrange Equation \eqref{eq_l_star} and obtain the expression of $r_{\text{KL}}(x,y)$:
    \begin{eqnarray}
        r_{\text{KL}}(x,y) = \beta^* \log L^*(y|x) +  \beta \log Z(x) = \beta^* \log \frac{\pitheta}{\piref}+  \beta \log Z(x) 
    \end{eqnarray}
    The theorem is proven. In comparison to the proofs presented in DPO \citep{DPO}, our proof is comprehensive and direct, applicable to any $\phi$-divergence constraints in the general PPO objective. DPO \citep{DPO} represents a specific case that employs strategies to construct an objective in the form of KL-divergence.

\end{proof}
\newpage
\subsection{Formal Proof}
\label{formal_proof}
\rebuttal{

\paragraph{Step 1: Definition of $\phi$-divergence}
The $\phi$-divergence between $\pi_\theta$ and $\pi_{\text{ref}}$ is defined as:
\[
D_\phi(\pi_\theta, \pi_{\text{ref}}) = \mathbb{E}_{\pi_{\text{ref}}} \left[ \phi \left( \frac{\pi_\theta(y|x)}{\pi_{\text{ref}}(y|x)} \right) \right],
\]
where $\phi$ is a convex function with $\phi(1) = 0$. For KL-divergence, $\phi(t) = t \log t - t + 1$.

\paragraph{Step 2: Reformulating the RM-DRO Objective}
Using the Lagrangian dual form to incorporate the robustness constraint, the RM-DRO objective becomes:
\[
\operatorname*{min}_{\beta\geq 0, \alpha} \operatorname*{max}_{L(y|x)} \{ \mathbb{E}_{x \sim \mathcal{D}, y \sim \piref}[r_\theta(y|x) L(y|x)] -  \beta [ \mathbb{E}_{\piref}[{\phi}(L(y|x))] -\eta] + \alpha (\mathbb{E}_{\piref} [L(y|x)]  - 1) \}
\]
where $\beta, \alpha$ are the Lagrange multipliers associated with the robustness constraint, $L(y|x)=\pitheta(y|x)/\piref(y|x)$.

\paragraph{Step 3: Solving for Optimal Policy}

According to the proof of Theorem \ref{thm:cl2dro}, solving this Lagrange function with KL-divergence yields:
\[
r_\phi(x, y) = \beta \log \frac{\pi_\theta(y|x)}{\pi_{\text{ref}}(y|x)} - \alpha,
\]
where $\alpha$ is a normalization constant ensuring that $\pi_\theta$ is a valid probability distribution:
\[
\alpha = -\beta \log \mathbb{E}_{y \sim \pi_{\text{ref}}} \left[ \exp\left( \frac{r_\phi(x, y)}{\beta} \right) \right].
\]

\paragraph{Step 4: Connection to DPO}
The reward function derived above matches the closed-form expression for the reward function in DPO:
\[
r(x, y) = \beta \log \frac{\pi_\theta(y|x)}{\pi_{\text{ref}}(y|x)} + \beta \log Z(x),
\]
where $Z(x)$ is the partition function defined as:
\[
Z(x) = \sum_y \pi_{\text{ref}}(y|x) \exp\left(\frac{r(x, y)}{\beta}\right).
\]

\paragraph{Step 5: Robustness to Pointwise Noise}
The DRO formulation inherently considers the worst-case distribution within the robustness radius $\eta$. By solving the constrained optimization problem under KL-divergence, DPO implicitly optimizes for the worst-case distributional perturbation. Thus, DPO achieves robustness to pointwise noise, as the DRO mechanism mitigates the impact of noisy data points by optimizing over a family of perturbed distributions.

By aligning the RM-DRO objective with the reward function formulation in DPO, we establish that DPO implicitly operates as a pointwise DRO framework under KL-divergence. This demonstrates its inherent robustness to pointwise noise.
}

\newpage
\subsection{Proof of Theorem \ref{thm:Dr. DPO}}
\label{appendix_3}
\gradientmatchDRDPO*

\begin{proof}
    To solve the optimization problem in Equation~\eqref{eq:udro}, we first introduce the Lagrangian function:
\begin{equation}
    \begin{aligned}
        & \mathcal{O}(\mathcal{O}^{\prime}, \beta^{\prime}, \alpha^{\prime}) =  \mathbb{E}_{(x, y_w, y_l)\sim \mathcal{O}^{\prime}}[h(x,y_w,y_l)] \\
        & + \beta^{\prime} (\mathbb{D}_{\phi}(\mathcal{O}^{\prime}, \mathcal{O}) - \eta^{\prime}) + \alpha^{\prime}(\mathbb{E}_{\mathcal{O}}[\frac{\mathcal{O}^{\prime}}{\mathcal{O}}]-1).
    \end{aligned}
    \label{eq:lagrangian}
\end{equation}

Then, we can obtain the optimal distribution $\mathcal{O}^{\prime, *}$ by solving the following saddle-point problem:
\begin{equation}
    \begin{aligned}
        \mathcal{O}^{\prime, *} = \argmax_{\mathcal{O}^{\prime}} \min_{\beta^{\prime}, \alpha^{\prime}} \mathcal{O}(Q, \beta^{\prime}, \alpha^{\prime}).\\
    \end{aligned}
    \label{eq:saddle}   
\end{equation}
Specifically, when the KL divergence is selected as the measure of $\phi$-divergence, that is, $\text{KL}(\mathcal{O}^{\prime}, \mathcal{O} )=\sum_{i=1}^{N}\mathcal{O}^{\prime}_i\log(\mathcal{O}^{\prime}_i /  \mathcal{O}_i )$, the optimal distribution $\mathcal{O}^{\prime, *}_{\text{KL}}$ can be derived as follows:
\begin{equation}
    \mathcal{O}^{\prime,*}_{\text{KL}} = \frac{1}{Z^*} \exp \left( \frac{h(x,y_w,y_l)}{\beta^{\prime}} \right),
    \label{eq:optimal_Q}
\end{equation}
where $Z^*=\mathbb{E}_{(x, y_w, y_l) \sim \mathcal{O} } [\exp(h(x,y_w,y_l) / \beta^{\prime}) ]$ denotes the partition function.
In this case, we can derive a closed-form expression of the ultimate objective $\mathcal{O}(\mathcal{O}^{\prime,*}_{\text{KL}}, \lambda^{\prime})$ as follows:
\begin{equation}
    \begin{aligned}
        \mathcal{O}(\mathcal{O}^{\prime, *}_{\text{KL}}, \beta^{\prime}) &= \beta^{\prime} \log \mathbb{E}_{\mathcal{O}}[ \exp( \frac{h(x,y_w,y_l)}{\beta^{\prime}} )] \\
    \end{aligned}
    \label{eq:optimal_O}
\end{equation}
In order to attain a Distributionally Robustifying DRO objective that encompasses both pointwise and pairwise robustness, we consider the previously established fact that the DPO approach confers pointwise robustness. Consequently, by substituting the term $h(x,y_w,y_l)$ in Equation \eqref{eq:optimal_O} with the DPO objective from Equation \eqref{eq:dpo}, we can derive a comprehensive objective that integrates the strengths of both methods:
\begin{equation}
    \begin{aligned}
        & \mathcal{L}_\text{Dr. DPO}(\pi_{\theta}; \piref)  =  - \beta^{\prime} \log \mathbb{E}_{\mathcal{O}}[ \exp(h_{\text{DPO}}(x,y_w,y_l) / \beta^{\prime}) ].
    \end{aligned}
    \label{eq:Dr. DPO}
\end{equation}
Here $\small{h_{\text{DPO}}=\log \sigma(\beta \log \frac{\pi_{\theta}(y_w\mid x)}{\piref(y_w\mid x)} - \beta \log \frac{\pi_{\theta}(y_l\mid x)}{\piref(y_l\mid x)})}$ denotes the optimal policy using in DPO. 
\end{proof}
\newpage
\subsection{Proof of Theorem \ref{thm:DRDPO_bound}}
\label{proof_bound}
\boundDRDPO*
\begin{proof}
Firstly, we assume that the optimal policy $\mathcal{O}^{\prime}$ satisfies the following constraint:
\begin{equation}
    \mathcal{O}^{\prime} \in \{ Q \mid \mathbb{D}_{\text{KL}}(\mathcal{O}^{\prime}, \mathcal{O}) \leq \eta^{\prime} \}
\end{equation}
Under this assumption, the loss function $\mathcal{L}_{\mathcal{O}^{\prime}}$ can be bounded as:
\begin{equation}
\begin{aligned}
    \mathcal{L}_{\mathcal{O}^{\prime}} &= \mathbb{E}_{\mathcal{O}^{\prime}} [h(x,y_w,y_l)] \\
    &\leq \max_{\mathbb{D}_{\text{KL}}(\mathcal{O}^{\prime}, \mathcal{O}) \leq \eta^{\prime}} \mathbb{E}_{\mathcal{O}^{\prime}} [h(x,y_w,y_l)] \\
    &= \beta^{\prime} \log \mathbb{E}_{\mathcal{O}} \left[ \exp\left(\frac{h(x,y_w,y_l)}{\beta^{\prime}}\right) \right] \\
    &= \mathcal{L}_{\text{Dr. DPO}}.
\end{aligned}
\end{equation}



We now introduce McDiarmid’s inequality as a foundational result:

\begin{theorem}[McDiarmid’s Inequality]
 Let $X_1,...,X_N\in\mathcal{X}^N$ be a set of $N\geq 1$ independent random variables and assume that there exists $c_1,...,c_N > 0$ such that $f:\mathcal{X}^N\to\mathbb{R}$ satisfies:
\begin{equation}
    |f(x_1,...,x_i,...,x_N) - f(x_1,...,x'_i,...,x_N)| \leq c_i.
\end{equation}
For all $i\in{1,2,...N}$ and any points $x_1,...x_N,x'_i\in \mathcal{X}$. Let $f(S)$ denote $f(X_1,...,X_N)$, then for all $\epsilon>0$, the following inequalities hold:
\begin{equation}
    \begin{aligned}
        &\mathbb{P}[f(S)-\mathbb{E}\{f(S)\}\geq \epsilon] \leq \exp\left(\frac{-2\epsilon^2}{\sum_{i=1}^N c_i^2}\right).\\
    \end{aligned}
\end{equation}
\end{theorem}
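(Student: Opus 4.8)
The plan is to split the claimed inequality into a deterministic domination step and a high-probability concentration step. For the domination step I would reuse the variational identity established in the proof of Theorem~\ref{thm:Dr. DPO}: for any feasible $\mathcal{O}^{\prime}$ with $\mathbb{D}_{\text{KL}}(\mathcal{O}^{\prime}, \mathcal{O}) \leq \eta^{\prime}$, the ideal loss is dominated by the population worst case,
\[
\mathcal{L}_{\mathcal{O}^{\prime}} = \mathbb{E}_{\mathcal{O}^{\prime}}[h(x,y_w,y_l)] \leq \max_{\mathbb{D}_{\text{KL}}(\mathcal{O}^{\prime},\mathcal{O})\leq \eta^{\prime}} \mathbb{E}_{\mathcal{O}^{\prime}}[h(x,y_w,y_l)] = \beta^{\prime}\log \mathbb{E}_{\mathcal{O}}\!\left[\exp\!\left(\frac{h(x,y_w,y_l)}{\beta^{\prime}}\right)\right],
\]
where the final equality is exactly \eqref{eq:optimal_O}, with $\beta^{\prime}$ the dual multiplier attached to the radius $\eta^{\prime}$. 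This reduces the theorem to controlling the gap between this population quantity and its $N$-sample empirical counterpart $\mathcal{L}_{\text{Dr. DPO}}^N$.

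For the concentration step I would view the empirical objective as a function $f(X_1,\ldots,X_N)$ of the $N$ independent draws $X_i=(x^{(i)},y_w^{(i)},y_l^{(i)})$, namely the self-normalized reweighted empirical risk $f(X_1,\ldots,X_N)=\sum_{i=1}^N w_i\,h_i$ induced by the optimal tilting $w_i=\exp(h_i/\beta^{\prime})/\sum_{j=1}^N\exp(h_j/\beta^{\prime})$ of \eqref{eq:optimal_Q}, where $h_i:=h(X_i)$. The goal is then to invoke the stated McDiarmid inequality, for which I must verify the bounded-differences property and read off the constants $c_i$. Since these constants will come out equal, McDiarmid yields, with probability at least $1-\delta$, a deviation $\epsilon=\sqrt{\tfrac{1}{2}\big(\sum_{i=1}^N c_i^2\big)\ln\tfrac{1}{\delta}}=c\,\sqrt{\tfrac{N}{2}\ln\tfrac{1}{\delta}}$, where $c$ is their common value; this is precisely the shape of $\mathcal{B}(\delta,N,\beta^{\prime})$ once $c$ is identified.

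The main obstacle is the bounded-differences estimate itself. Replacing a single sample $X_i$ by $X_i^{\prime}$ perturbs both its own tilted weight and the shared normalizer $Z=\sum_j \exp(h_j/\beta^{\prime})$, so the summands of $f$ are coupled and cannot be handled coordinatewise naively. The device I would use is the algebraic identity $f = m + \frac{u_i(h_i-m)}{u_i+B}$, where $u_i=\exp(h_i/\beta^{\prime})$, $B=\sum_{j\neq i}u_j$, and $m=\tfrac{1}{B}\sum_{j\neq i}u_j h_j$ is the reweighted mean over the untouched coordinates. Since $h\in[a,b]$ forces $u_i\in[\exp(a/\beta^{\prime}),\exp(b/\beta^{\prime})]$ and hence $B\geq (N-1)\exp(a/\beta^{\prime})$, the single-coordinate influence is controlled by $\frac{u_i}{u_i+B}\,|h_i-m|\leq \frac{\exp(b/\beta^{\prime})}{(N-1)\exp(a/\beta^{\prime})+\exp(b/\beta^{\prime})}\,b$; adding the contributions of the removed and the inserted sample doubles this, and dividing numerator and denominator by $\exp(a/\beta^{\prime})$ gives the common constant
\[
c=\frac{2b\exp\big((b-a)/\beta^{\prime}\big)}{N-1+\exp\big((b-a)/\beta^{\prime}\big)}.
\]
Substituting this $c$ into $\epsilon=c\sqrt{\tfrac{N}{2}\ln\tfrac{1}{\delta}}$ reproduces $\mathcal{B}(\delta,N,\beta^{\prime})$ exactly, and chaining with the domination step closes the argument. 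I expect the two delicate points to be (i) confirming that the denominator lower bound $(N-1)\exp(a/\beta^{\prime})$ is sharp enough to give the stated constant rather than a looser $1/N$ scaling, and (ii) aligning the lower-tail direction of McDiarmid with the target inequality, which requires that the expectation of the reweighted empirical risk upper-bound the ideal loss $\mathcal{L}_{\mathcal{O}^{\prime}}$ so that the population worst case and the empirical objective can be compared through $\mathbb{E}[f]$.
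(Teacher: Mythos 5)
You have proved the wrong statement. The statement in question is McDiarmid's inequality itself, which the paper states without proof as a classical foundational result inside its argument for Theorem~\ref{thm:DRDPO_bound}. Your proposal never establishes it: you explicitly say you will ``invoke the stated McDiarmid inequality'' after verifying bounded differences, so with respect to the target the argument is circular --- it assumes exactly the concentration bound it was supposed to derive. What you have actually reconstructed is the paper's proof of Theorem~\ref{thm:DRDPO_bound}: your domination step via the variational identity $\mathcal{L}_{\mathcal{O}^{\prime}} \leq \beta^{\prime}\log\mathbb{E}_{\mathcal{O}}[\exp(h/\beta^{\prime})]$, your bounded-difference constant $c=\frac{2b\exp\left((b-a)/\beta'\right)}{N-1+\exp\left((b-a)/\beta'\right)}$, and the substitution $\epsilon=c\sqrt{\tfrac{N}{2}\ln\tfrac{1}{\delta}}$ all coincide with the paper's argument for that theorem (your coordinatewise influence estimate is, if anything, more careful than the paper's one-line bound via $2\sup_{\mathcal{O}}|w\,h|$). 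But none of that discharges the actual obligation.

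A proof of the stated inequality needs the martingale method, which is entirely absent from your proposal. Concretely: define the Doob martingale $M_i=\mathbb{E}[f(S)\mid X_1,\dots,X_i]$ with increments $V_i=M_i-M_{i-1}$, so that $f(S)-\mathbb{E}[f(S)]=\sum_{i=1}^N V_i$. The bounded-differences hypothesis implies that, conditionally on $X_1,\dots,X_{i-1}$, each $V_i$ lies almost surely in an interval of length at most $c_i$, so Hoeffding's lemma gives $\mathbb{E}[\exp(\lambda V_i)\mid X_1,\dots,X_{i-1}]\leq \exp(\lambda^2 c_i^2/8)$. Chaining these bounds by the tower property yields $\mathbb{E}[\exp(\lambda(f(S)-\mathbb{E}[f(S)]))]\leq \exp\bigl(\lambda^2\sum_{i=1}^N c_i^2/8\bigr)$, and Markov's inequality followed by optimizing at $\lambda=4\epsilon/\sum_{i=1}^N c_i^2$ gives $\mathbb{P}[f(S)-\mathbb{E}[f(S)]\geq\epsilon]\leq\exp\bigl(-2\epsilon^2/\sum_{i=1}^N c_i^2\bigr)$, which is the claimed bound. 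Independence of the $X_i$ is what makes the conditional increment intervals have length $c_i$; that verification, together with Hoeffding's lemma, is the substantive content your proposal skips.
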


Given a dataset with $N$ samples, for any pair of samples: $(x,y_w,y_l)$, $(x',y'_w,y'_l)$, we have:
\begin{equation}
\begin{aligned}
&|w{(x,y_w,y_l)}h((x,y_w,y_l)) - w{(x',y'_w,y'_l)}h((x',y'_w,y'_l))|\\
\leq & 2\sup_{\mathcal{O}}|w{(x,y_w,y_l)}h(x,y_w,y_l)| \\
\leq & \frac{2b\exp\left({(b-a)/\beta'}\right)}{N- 1 + \exp\left({(b-a)/\beta'}\right)},\\
\end{aligned}
\end{equation}
where the second inequality holds as $w(x, y_w, y_l) = \frac{\exp(h(x,y_w,y_l)/ \beta^{\prime})}{\mathbb{E}_{\mathcal{O}}[\exp(h(x,y_w,y_l)/ \beta^{\prime})]}$ and $h_{\text{DPO}} \in [a,b]$.

By applying McDiarmid’s inequality, we obtain:
\begin{equation}
    \mathbb{P}\left( \mathcal{L}_{\text{Dr. DPO}} - \mathcal{L}_{\text{Dr. DPO}}^N \geq \varepsilon \right) \leq \exp\left(-\frac{2\varepsilon^2}{N} \left(\frac{N- 1 + \exp\left({(b-a)/\beta'}\right)}{2b\exp\left({(b-a)/\beta'}\right)}\right)^2\right).
\end{equation}

Setting:
\begin{equation}
    \delta = \exp\left(-\frac{2\varepsilon^2}{N} \left(\frac{N- 1 + \exp\left({(b-a)/\beta'}\right)}{2b\exp\left({(b-a)/\beta'}\right)}\right)^2\right),
\end{equation}
we can solve for $\varepsilon$ as:
\begin{equation}
    \varepsilon = \frac{2b\exp\left({(b-a)/\beta'}\right)}{N- 1 + \exp\left({(b-a)/\beta'}\right)} \sqrt{\frac{N}{2} \ln \frac{1}{\delta}}.
\end{equation}

Thus, for any $\delta \in (0,1)$, we conclude that with probability at least $1 - \delta$:
\begin{equation}
    \mathcal{L}_{\mathcal{O}^{\prime}} \leq \mathcal{L}_{\text{Dr. DPO}} \leq \mathcal{L}_{\text{Dr. DPO}}^N +\frac{2b\exp\left({(b-a)/\beta'}\right)}{N- 1 + \exp\left({(b-a)/\beta'}\right)} \sqrt{\frac{N}{2} \ln \frac{1}{\delta}}.
\end{equation}

\end{proof}

\subsection{Proof of $w(x,y_w,y_l)$}
\label{proof_gradient}
In this section, we present the derivation of the gradient for the Dr. DPO objective function as follows:

\begin{equation}
    \nabla_\theta \mathcal{L}_\text{Dr. DPO}(\pi_\theta; \pi_{\text{ref}}) = - \nabla_\theta \beta^{\prime} \log \mathbb{E}_{\mathcal{O}}\left[ \exp\left(\frac{h_{\text{DPO}}(x, y_w, y_l)}{\beta^{\prime}}\right) \right].
    \label{eq_refined_1}
\end{equation}

The right-hand side of Equation~\eqref{eq_refined_1} can be rewritten as:

\begin{equation}
\begin{aligned}
    &\nabla_\theta \beta^{\prime} \log \mathbb{E}_{\mathcal{O}}\left[ \exp\left(\frac{h_{\text{DPO}}(x, y_w, y_l)}{\beta^{\prime}}\right) \right]\\
    &= \nabla_{h_{\text{DPO}}(x, y_w, y_l)} \left[ \beta^{\prime} \log \mathbb{E}_{\mathcal{O}}\left[\exp\left(\frac{h_{\text{DPO}}(x, y_w, y_l)}{\beta'}\right) \right]\right] \nabla_\theta h_{\text{DPO}}.
\end{aligned}
\end{equation}

Considering the gradient with respect to the function $h_{\text{DPO}}(x, y_w, y_l)$ yields:
\begin{equation}
\nabla_{h_{\text{DPO}}(x, y_w, y_l)} \left[ \beta^{\prime} \log \mathbb{E}_{\mathcal{O}}\left[\exp\left(\frac{h_{\text{DPO}}(x, y_w, y_l)}{\beta'}\right) \right]\right]
= \frac{\exp\left(\frac{h_{\text{DPO}}(x, y_w, y_l)}{\beta'}\right)}{\mathbb{E}_{\mathcal{O}}\left[\exp\left(\frac{h_{\text{DPO}}(x, y_w, y_l)}{\beta'}\right) \right]}.
\end{equation}

Focusing on the gradient with respect to $\theta$, we have:

\begin{equation}
\begin{aligned}
    \nabla_\theta h_{\text{DPO}}
    = \nabla_\theta \log \sigma\left(\beta \log \frac{\pi_{\theta}(y_w | x)}{\pi_{\text{ref}}(y_w | x)} - \beta \log \frac{\pi_{\theta}(y_l | x)}{\pi_{\text{ref}}(y_l | x)}\right) 
    = \frac{\sigma^{\prime}(u)}{\sigma(u)} \nabla_\theta u,
\end{aligned}
\end{equation}

where $u=\beta \log \frac{\pi_{\theta}(y_w | x)}{\pi_{\text{ref}}(y_w | x)} - \beta \log \frac{\pi_{\theta}(y_l | x)}{\pi_{\text{ref}}(y_l | x)}$. Leveraging the properties of the sigmoid function, where $\sigma^{\prime}(x) = \sigma(x)(1-\sigma(x))$ and $\sigma^{\prime}(-x) = 1-\sigma(x)$, we derive the final gradient expression:

\begin{equation}
\begin{aligned}
    &-\nabla_\theta \beta^{\prime} \log \mathbb{E}_{\mathcal{O}}\left[ \exp\left(\frac{h_{\text{DPO}}(x, y_w, y_l)}{\beta^{\prime}}\right) \right] \\
    = &-\frac{\exp\left(\frac{h_{\text{DPO}}(x, y_w, y_l)}{\beta'}\right)}{\mathbb{E}_{\mathcal{O}}\left[\exp\left(\frac{h_{\text{DPO}}(x, y_w, y_l)}{\beta'}\right) \right]} \left[\beta \sigma\left( \beta \log \frac{\pi_{\theta}(y_l | x)}{\pi_{\text{ref}}}(y_l | x)\right) - \beta \log \frac{\pi_{\theta}(y_w | x)}{\pi_{\text{ref}}(y_w | x)}\right] \\
    &\cdot \left[\nabla_\theta \log \pi_{\theta}(y_w | x) - \nabla_\theta \log \pi_{\theta}(y_l | x)\right].
\end{aligned}
\end{equation}

Here, a crucial indicator that distinguishes Dr. DPO from traditional DPO is encapsulated by the weight term:
\begin{equation}
w(x, y_w, y_l) = \frac{\exp\left(\frac{h_{\text{DPO}}(x, y_w, y_l)}{\beta'}\right)}{\mathbb{E}_{\mathcal{O}}\left[\exp\left(\frac{h_{\text{DPO}}(x, y_w, y_l)}{\beta'}\right) \right]},
\end{equation}
which gravitates towards a uniform distribution as the parameter $\beta'$ approaches infinity. In such a scenario, the gradient of Dr. DPO aligns with that of the standard DPO. This relationship furnishes a deeper insight into how Dr. DPO can be linked and differentiated from DPO through the incorporation of a dynamic tuning parameter $\beta'$, enhancing the adaptability of policy optimization in varied environments.

\newpage
\section{Analysis}
\subsection{Analysis about general $\phi$-divergence.}
\label{analysis_phi}
\begin{restatable}[Optimal Reward Function under General $\phi$-Divergence]{lemma}{fdivergencedpo}\citep[Theorem 1]{f-dpo}
    \label{lemma:optimal_beta_divergence}
    Given a $\phi$-divergence $\mathbb{D}_\phi$ with corresponding derivative $\phi'$, the optimal reward function $r_{\phi}(x,y)$ under the RM-DRO framework is defined by:
    \begin{equation}
    \label{eq:modified_dro_reward_f}
    r_{\phi}(x,y) = \beta^*(\eta) \phi^{\prime} ( \frac{\pi_\theta(y|x)}{\pi_{\text{ref}(y|x)}}) - \alpha,
    \end{equation}
    where $\alpha$ is Lagrange multiplier. 
\end{restatable}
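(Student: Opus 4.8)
The plan is to reuse verbatim the Lagrangian-dual machinery developed in the proof of Theorem~\ref{thm:cl2dro}, but to stop one step earlier — at the general convex-conjugate stage — instead of specializing the conjugate to the KL case. First I would introduce the likelihood ratio $L(y|x) = \pitheta(y|x)/\piref(y|x)$ and rewrite the RM-DRO objective of Equation~\eqref{eq:dro_reward} as the constrained problem $\max_L \mathbb{E}_{\piref}[r_\theta(y|x)\,L]$ subject to $\mathbb{E}_{\piref}[\phi(L)] \le \eta$ together with the normalization $\mathbb{E}_{\piref}[L] = 1$. Forming the Lagrangian with a multiplier $\beta \ge 0$ for the divergence constraint and $\alpha$ for normalization, then invoking strong duality and the interchange of minimization and integration (Theorem~\ref{thm_rearange}), reduces the problem to a pointwise inner maximization $\max_L \{ \tfrac{r_\theta(y|x)+\alpha}{\beta} L - \phi(L)\}$ for each $(x,y)$, exactly as in Equation~\eqref{kl_dro_appendix_1}.

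The key step — the point of departure from the KL-specific argument — is to characterize the maximizer of this inner problem through its first-order condition rather than by evaluating a closed-form conjugate. Differentiating $\tfrac{r_\theta+\alpha}{\beta}L - \phi(L)$ in $L$ and setting it to zero yields $\phi'(L^*) = \tfrac{r_\theta(y|x)+\alpha}{\beta}$, i.e. $L^* = (\phi')^{-1}\!\big(\tfrac{r_\theta+\alpha}{\beta}\big)$. Since at optimality $L^* = \pitheta(y|x)/\piref(y|x)$, substituting this identity gives $\phi'\!\left(\frac{\pitheta(y|x)}{\piref(y|x)}\right) = \tfrac{r_\theta(y|x)+\alpha}{\beta}$; rearranging for $r_\theta$ and identifying $\beta$ with the optimal dual value $\beta^*(\eta)$ delivers precisely Equation~\eqref{eq:modified_dro_reward_f}. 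As a sanity check I would specialize to $\phi_{\text{KL}}(t) = t\log t - t + 1$, so that $\phi'_{\text{KL}}(t) = \log t$ and the claim collapses to $r_{\text{KL}} = \beta^* \log(\pitheta/\piref) - \alpha$, recovering Theorem~\ref{thm:cl2dro}.

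The main obstacle I anticipate is the rigorous justification of the stationarity step in full generality. It requires $\phi$ to be differentiable and strictly convex so that $\phi'$ is invertible and the interior maximizer $(\phi')^{-1}\!\big(\tfrac{r_\theta+\alpha}{\beta}\big)$ is well defined, and it presumes the KKT and constraint-qualification conditions under which the stationary point coincides with the constrained optimum (equivalently, that strong duality is exact, as already used in Equation~\eqref{kl_dro_appendix_1}). Additional care is needed at the boundary of $\operatorname{dom}\phi$, where the unconstrained maximizer may leave the feasible region and the conjugate is attained only in the limit, and in confirming $\beta^*(\eta) > 0$ so that division by $\beta$ is legitimate. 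Handling these regularity caveats is where the genuine effort lies, since the remaining algebraic rearrangement into Equation~\eqref{eq:modified_dro_reward_f} is routine.
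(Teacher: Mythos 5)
Your proposal is correct and follows essentially the same route as the paper's proof: the paper likewise reuses the Lagrangian/interchange machinery from Theorem~\ref{thm:cl2dro}, writes the pointwise inner problem $\max_L\{\tfrac{r_\theta+\alpha}{\beta}L-\phi(L)\}$, and obtains $r_\theta(y|x)=\beta\,\phi'(L(y|x))-\alpha$ from the first-order condition. Your additional attention to the regularity caveats (strict convexity, invertibility of $\phi'$, boundary behavior, $\beta^*(\eta)>0$) goes beyond what the paper records, which simply differentiates and sets the derivative to zero.
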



\begin{proof}
    To determine the optimal expression for $r_{\phi}(x,y)$, one must identify the optimal $L(y|x)$. As established in Theorem \ref{thm:cl2dro}, the optimal $L(y|x)$ is given by:
    \begin{equation}
        \begin{aligned}
            &\beta \operatorname*{arg\,max}_{L(y|x)} \mathbb{E}_{x \sim \mathcal{D}, y \sim \piref} \left[\frac{r_\theta (y|x) + \alpha}{\beta} L(y|x) - \phi(L(y|x))\right] \\
            = &\beta \mathbb{E}_{x \sim \mathcal{D}, y \sim \piref} \left[\operatorname*{arg\,max}_{L(y|x)} \left\{\frac{r_\theta(y|x) + \alpha}{\beta} L(y|x) - \phi(L(y|x)) \right\}\right] \\
            = &\beta \mathbb{E}_{x \sim \mathcal{D}, y \sim \piref} \left[\phi^* \left( \frac{r_\theta(y|x) + \alpha}{\beta} \right)\right]. \\
        \end{aligned}
        \label{optimal_q}
    \end{equation}
    To find this maximum, we differentiate the convex function \wrt $L(y|x)$ and equate the derivative to zero:
    \begin{equation}
        \frac{\partial}{\partial L} \left\{ \frac{r_\theta(y|x) + \alpha}{\beta} L(y|x) - \phi(L(y|x)) \right\} = 0.
    \end{equation}
    Solving for $r_\theta(y|x)$ yields the optimal expression:
    \begin{equation}
        r_\theta(y|x) = \beta \phi^{\prime}(L(y|x)) - \alpha.
    \end{equation}
    Given that $\alpha$ is a constant, the critical component of the expression is $\beta \phi^{\prime}(L(y|x))$. While this result aligns with Theorem 1 from \citep{f-dpo}, our approach is grounded in a comprehensive DRO framework, providing a more direct and complete theoretical justification. Thus, the lemma is substantiated.
\end{proof}

\textbf{Comparison with \citet{f-dpo}.} Since $\alpha$ is a constant that does not affect the optimization process, Lemma \ref{lemma:optimal_beta_divergence} reveals that the reward function $r(x, y)$ is influenced not only by the choice of $\phi$-divergence but also by the parameter $\beta$. 
This observation suggests an intuitive understanding that various $\phi$-divergences enforce constraints with unique geometric characteristics, which, in turn, dictate the optimal value of $\beta^*(\eta)$ within DRO. Interestingly, our findings contradict the conclusions presented in \citet{f-dpo}, which suggest that various $\phi$-divergences lead to different alignment accuracy. Instead, our results demonstrate that by fine-tuning the parameter $\beta$, comparable performance can be achieved across different divergences (\cf Table \ref{fig_motivation}). This underscores the critical role of the robust radius in determining the efficacy of DRO frameworks. For detailed experimental settings, please refer to Section \ref{exp_hh}. 
Moreover, a thorough analysis of the behavior of diverse $\phi$-divergences can be found in Appendix \ref{appendix_exp}.
\begin{table}[b]
    \vspace{-10pt}
    \centering
    \caption{Comparison of win rates across various $\phi$-divergences with adjustments to $\beta$.}
    \resizebox{0.5\columnwidth}{!}{
    \begin{tabular}{lllll}
       \toprule
       {$\phi$-divergence} & win rate & $\beta$  & win rate & $\beta$  \\
       \midrule
       KL & 54.36 & 0.1 & 55.40 & 0.15 \\
       JSD & 54.36 & 0.1 & 54.75 & 5e-2 \\
       $\alpha=0.3$ & 45.02 & 0.1 & {54.59} & 1e-4 \\
       $\alpha=0.5$ & 44.06 & 0.1 & {56.60} & 1e-6 \\
       $\alpha=0.7$ & 41.17 & 0.1 & {58.45} & 1e-6 \\
       \bottomrule
    \end{tabular}
    }
    \vspace{-5pt}
    \label{fig_motivation}
\end{table}

We compute the gradients \wrt the chosen likelihood ratios, $\frac{\pi_{\theta}(y_w|x)}{\piref(y_w|x)}$, and the rejected ratios, $\frac{\pi_{\theta}(y_l|x)}{\piref(y_l|x)}$. As depicted in Figure \ref{fig:f_div}, the choice of $\phi$-divergence influences the gradient update rules in a consistent manner, though the performance varies with the scaling parameter $\beta$. With an optimal selection of $\beta$, the disparities between the performances of different $\phi$-divergences diminish significantly (refer to Table \ref{fig_motivation} for a comparative analysis).
\begin{figure*}[h] 
    \centering 
    \includegraphics[width=0.6\columnwidth]{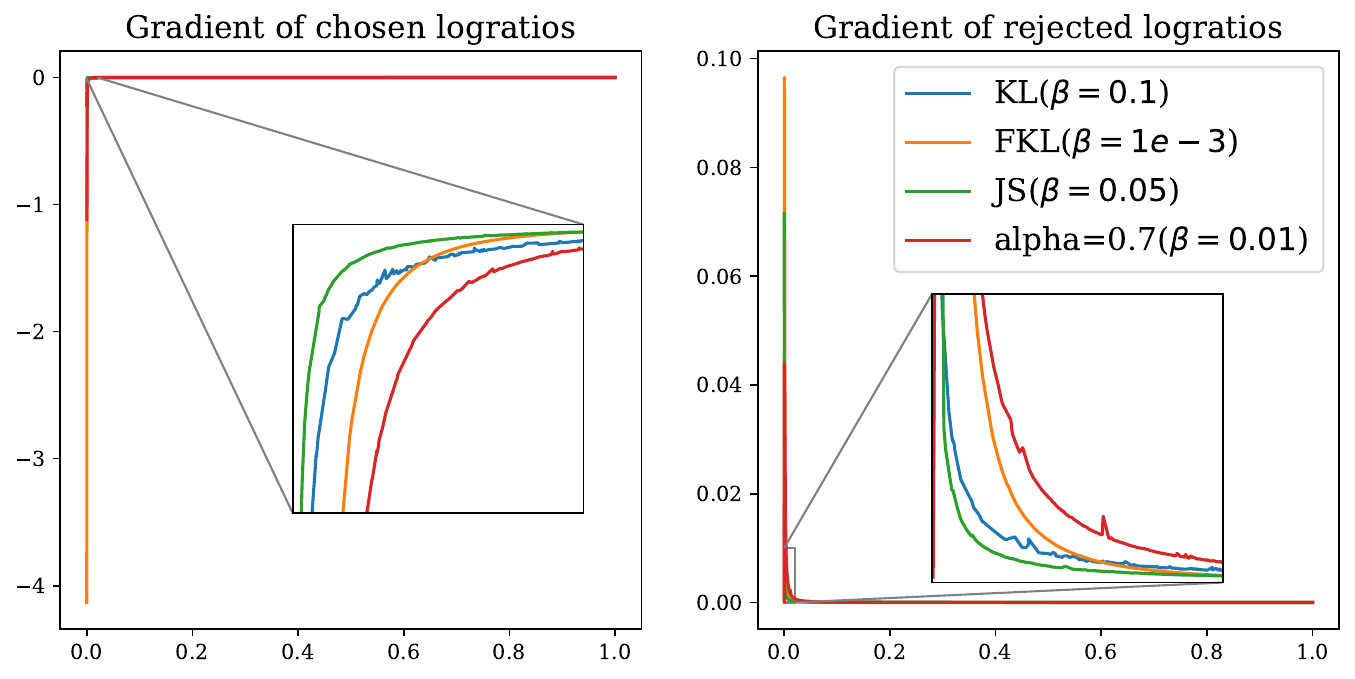}
    \caption{Gradient analysis with respective to different $\phi$-divergence. 
    } 
    \label{fig:f_div} 
\end{figure*}
\subsection{Comparison between DPO and DRO}
\label{comparison_dpo_dro}
The fundamental principle of the DRO framework is to guard against the worst possible distribution within a defined ambiguity set. Notably, the ``min'' operation manifests implicitly outside of the loss function $\mathcal{L}_{\text{DPO}}$. To provide a more insightful examination of the min-max component, it is instructive to trace DPO's genesis back to its roots in RLHF. A two-phase process within RLHF elucidates this mechanism:

1. \textbf{Reward Modeling Phase}: In this phase, we integrate human preferences via a negative log-likelihood loss:
$$\underset{r_{\phi}}{\min} \mathcal{L}(r, \mathcal{O}) = \underset{r_{\phi}}{\min} - \mathbb{E}_{(x, y_w, y_l) \sim \mathcal{O}} [ \log  \sigma (r_{\phi}(x,y_w)-r_{\phi}(x,y_l))]$$
It optimizes for a reward function $r_{\phi}$ by minimizing the predicted disparity in preferences between winning and losing outcomes $y_w$ and $y_l$, as determined through interaction instances $(x, y_w, y_l)$ gathered from the training set $\mathcal{O}$.

2. \textbf{Reinforcement Learning (RL) Fine-Tuning Phase}: This stage leverages the learned reward function to generate feedback for the language model, adopting a policy improvement step that embodies our 'max' operation perceived during the fine-tuning phase:
$$\underset{\pi_{\theta}}{\max} \mathbb{E}_{x \sim \mathcal{O}, y \sim \pi_\theta(y|x)}[r_\phi(x,y)] - \beta \mathbb{D}_{\text{KL}}[\pi_\theta(y|x) || \pi_{\text{ref}}(y|x)].$$

As elucidated in the introduction of DPO in preliminaries,
\begin{quote}
    This allows for the direct optimization of the policy by \textbf{reparameterizing} the reward function using the policy (i.e., the language model) in a supervised manner.
\end{quote}


Subsequently, the closed-form solution from the RL phase is substituted into the Reward Modeling Phase, and the reparameterization of $r_{\phi}$ into $\pi_{\theta}$ yields:
\begin{equation}
   \underset{\pi_{\theta}}{\min}\mathcal{L}_\text{DPO}(\pi_{\theta}; \pi_{\text{ref}}) =   \underset{\pi_{\theta}}{\min}-\mathbb{E}_{(x, y_w, y_l)\sim \mathcal{O}}[\log \sigma (\beta \log \frac{\pi_{\theta}(y_w\mid x)}{\pi_{\text{ref}}(y_w\mid x)} - \beta \log \frac{\pi_{\theta}(y_l\mid x)}{\pi_{\text{ref}}(y_l\mid x)})].  
\end{equation}

Comparing DPO and DRO enhances understanding:


\textbf{DPO} 
\begin{itemize}[leftmargin=*]
    \item \textit{Motivation:} Suboptimal initial distribution of SFT model (reference model).
    \item \textit{Max Part:} Explore maximization criterion around the reference model, here aiming for maximal reward.
    \item \textit{Min Part:} Optimize the BT model on the novel reward function.
\end{itemize}

\textbf{DRO} 
\begin{itemize}[leftmargin=*]
    \item \textit{Motivation:} Suboptimal initial training set distribution.
    \item \textit{Max Part:} Explore maximization criterion around the initial training set distribution, traditionally a loss, but varies in different applications.
    \item \textit{Min Part:} Optimize the model on this novel distribution.
\end{itemize}









In conclusion, our methodology faithfully embodies the essence of DRO by instituting a protective mechanism against the distribution determined by a select ambiguity set and a specific criterion. The ``min'' operation, though indirectly represented in Equation \eqref{eq:dpo}, is an integral part of our model's optimization process, fitting well within the DRO framework's intent.

\subsection{Setup in pointwise Noise}
\label{sec:appendix_setup}
\begin{figure}[b!] 
    \centering 
    \includegraphics[width=\columnwidth]{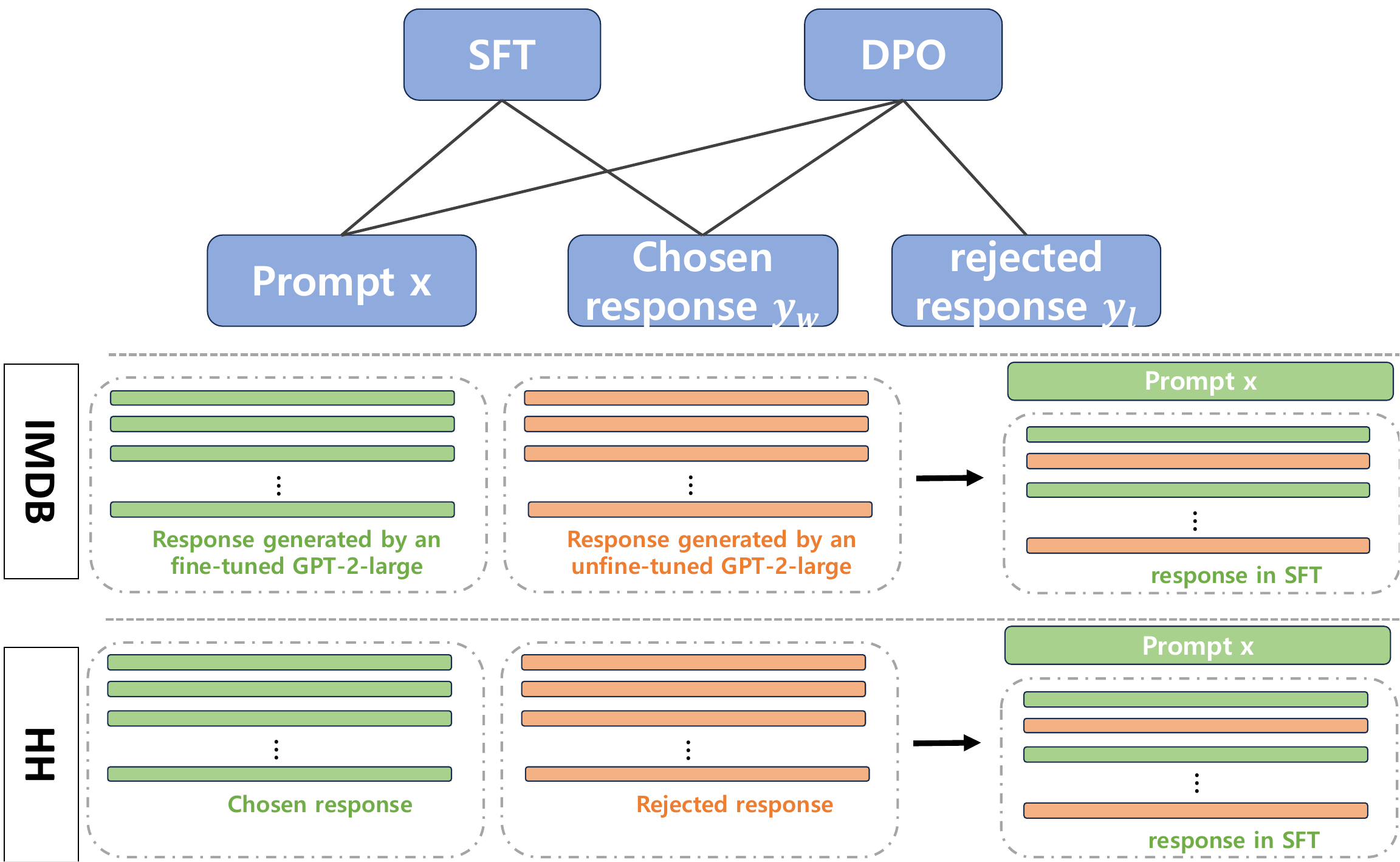}
    \caption{
    Pointwise Noise Data Construction in IMDB and HH
        }
    \label{fig:sft_data} 
    \vspace{-20pt} 
\end{figure}
To introduce the concept of pointwise noise, we first clarify the stages of DPO training:
\begin{itemize}[leftmargin=*]
    \item \textbf{SFT Stage}: Prior to alignment, both DPO and RLHF require supervised fine-tuning (SFT). Typically, in the SFT stage, a prompt \(x\) is paired with a chosen response \(y_w\).
    \item \textbf{DPO Stage}: DPO uses the model trained in the SFT stage as both the initialization and reference model. Training samples consist of a prompt \(x\) paired with a chosen response \(y_w\) and a rejected response \(y_l\). In most cases, the chosen response remains consistent with that of the SFT stage.
\end{itemize}

The term ``pointwise noise'' in this context refers to the impact of poor data quality. In the IMDB dataset, we generate training data of varying quality by using samples from a fine-tuned GPT-2-large \citep{gpt2} as high-quality samples and from an unfine-tuned GPT-2-large as low-quality samples. By adjusting the ratio of these two sets, we create different noise ratios (e.g., a 20\% noise ratio indicates that 20\% of the training data consists of low-quality samples generated by the unfine-tuned GPT-2-large).

In the HH scenario, positive and negative samples are predetermined. To create an unreliable reference model $\piref$, we replace the chosen responses in the SFT stage with their corresponding rejected responses. The DPO stage pairs remain unchanged, affecting only the SFT distribution, aligning with the intent described in Section \ref{sec:noise}.

Note: In our experiments, we ensure that the data used during the DPO phase remains consistent across both datasets. This consistency ensures that the only variable influencing the outcomes is the noise introduced to the SFT model. By maintaining the DPO data unchanged, we isolate the effect of noise on the SFT model's performance, allowing for a clear analysis of its impact.

\section{Appendix of Experiments}
\label{appendix_exp}


\begin{figure*}[h]
   \lstinputlisting[language=Python]{./DRDPO.sh}
      \caption{Pseudocode for our proposed Dr. DPO, as well as the original DPO objective.} 
      \label{fig_objective_code}
\end{figure*}

Figure \ref{fig_objective_code} presents a PyTorch-style pseudocode comparison between the standard objective and our proposed Dr. DPO objective. The implementation simplicity of the Dr. DPO loss is highlighted, as it necessitates no additional lines of code beyond what is required for the standard objective. This ease of integration underscores the practicality of adopting Dr. DPO in existing machine learning workflows without the need for extensive code modifications.

\subsection{Experiments Setup on HH}
For our preliminary research, we conducted experiments on the Anthropic HH dataset \citep{HH_dataset}, which comprises 170,000 human-automated assistant dialogues. 
Each dialogue concludes with two large language model-generated responses and an accompanying preference label denoting the human's favored choice. Our training regimen was in line with the DPO-established protocol \citep{DPO}. We built upon the Pythia 2.8B model, as described in \citep{Pythia}, to develop our Supervised Fine-Tuning (SFT) model. The SFT model was fine-tuned on the Anthropic HH dataset over the course of one epoch, employing a batch size of 64 and a learning rate of $5 \times 10^{-7}$. In addition, we further refined the model using the Anthropic HH dataset and the DPO loss function (or other baseline approaches) through an additional epoch of fine-tuning. To test the model's resilience to noise, we introduced random inversions between selected and rejected responses in the training data with probabilities of $10\%$, $20\%$, $30\%$, and $40\%$. Throughout these experiments, we consistently set the $\beta$ parameter to $0.1$ and adopted the Kullback-Leibler (KL) divergence as the metric for $\phi$-divergence. We carried out all computational tasks on a suite of four 80GB A100 GPUs.

\subsection{Experiments on Reddit TL;DR Dataset}
For a fair comparison with DPO, we maintained the parameters $\beta=0.5$ and $lr=1e-6$, and chose $\beta'=1.0$ without extensive tuning. This approach ensures that our evaluation of the proposed Dr. DPO framework is consistent and comparable to the existing baseline.

Finally, the table below presents the win-rate comparison on the TL;DR dataset under various sampling temperatures, further supporting our claims:
\begin{table}[H]
    \centering
    \caption{Comparison of DPO and Dr. DPO across various sampling temperatures.}
    \begin{tabular}{l|c|c|c|c|c}
        \hline
        Sampling Temperature & 0.0 & 0.25 & 0.5 & 0.75 & 1.0 \\
        \hline
        DPO & 45.06 & 46.74 & 46.70 & 39.50 & 21.28 \\
        Dr. DPO & 62.36 & 67.34 & 71.29 & 63.91 & 27.75 \\
        \hline
    \end{tabular}
    \label{tab:temperature_comparison}
\end{table}

As evidenced by Table \ref{tab:temperature_comparison}, Dr. DPO consistently outperforms DPO across different sampling temperatures, particularly at lower temperatures which are crucial for complex tasks such as summarization.

\subsection{Experiments on Ambiguous Datasets}
To incorporate the feedback regarding the evaluation of our approach on datasets with ambiguity-induced noise, we conducted additional experiments. These were aimed at understanding how performs under varying conditions of data perturbation, specifically through token masking and substitution. The comparative analysis between the traditional DPO and Dr. DPO was carried out under consistent experimental conditions to ensure the validity and reliability of the results.

\textbf{Experimental Setup.}
To simulate ambiguous datasets, we introduced randomness in the form of token masking and substitution at different ratios, thereby increasing the difficulty of the dataset. The intention was to assess the resilience and adaptability of our Dr. DPO method under challenging conditions that are akin to real-world scenarios. The experiments were conducted using the HH dataset, known for its complexity and relevance in evaluating data processing algorithms. 

The configurations for both DPO and Dr. DPO were kept consistent with previous experiments to maintain comparability. Specifically, we set $\beta=0.1$ and the learning rate $lr=5e-7$ for both approaches. For Dr. DPO, an additional hyperparameter, $\beta'$, was introduced and set to 1.0. Notably, we did not undertake extensive hyperparameter tuning, opting instead for a straightforward comparison.

\textbf{Experimental Results.}
The results of our experiments are summarized in the table below, illustrating preference accuracies under varying noise conditions:

\begin{table}[H]
\centering
\caption{Preference Accuracy on the HH Dataset with Varying Noise Ratios}
\begin{tabular}{cccc}
\hline
\textbf{Masking Ratio} & \textbf{0.05} & \textbf{0.10} & \textbf{0.15} \\ \hline
DPO                    & 58.77         & 56.64         & 54.39         \\
Dr. DPO                   & 59.21         & 58.44         & 56.10         \\ \hline
\end{tabular}
\begin{tabular}{cccc}
\hline
\textbf{Replacing Ratio} & \textbf{0.05} & \textbf{0.10} & \textbf{0.15} \\ \hline
DPO                      & 57.84         & 54.43         & 52.40         \\
Dr. DPO                     & 58.45         & 55.60         & 53.37         \\ \hline
\end{tabular}
\end{table}
\textbf{Discussion.}
The results indicate that Dr. DPO consistently outperforms DPO across different levels of induced noise, whether through masking or replacing tokens. Notably, the improvement in preference accuracy becomes more pronounced as the noise ratio increases, suggesting that Dr. DPO is more robust to ambiguity-induced noise compared to DPO. These findings validate our hypothesis that Dr. DPO can better handle the complexities and uncertainties inherent in real-world datasets.

It is also important to note that these results were obtained without extensive tuning of the Dr. DPO-specific hyperparameter, $\beta'$. Future work could involve a more detailed exploration of hyperparameter settings to potentially unlock further improvements in performance.

In conclusion, the additional experiments conducted in response to feedback have not only reinforced the effectiveness of our Dr. DPO approach but also opened avenues for future research into optimization strategies for processing ambiguous datasets.

\subsection{Reward}
\textbf{Reward.} 
The reward metric is computed on the IMDB dataset, which is selected for the availability of a ground-truth reward function provided by a sentiment classifier.  
Figure \ref{fig:exp_imdb} demonstrates that the Dr. DPO algorithm achieves enhanced stability and superior reward performance under varying pairwise noise and different $\beta$. Additionally, by setting $\beta^{\prime}$ to a fixed value of 1, we address the issue of DPO's sensitivity to the parameter $\beta$. This consistent setting of $\beta^{\prime}$ eliminates the need for extensive parameter tuning, which significantly benefits practical applications.
\begin{figure}[t] 
    \centering 
    \includegraphics[width=0.8\columnwidth]{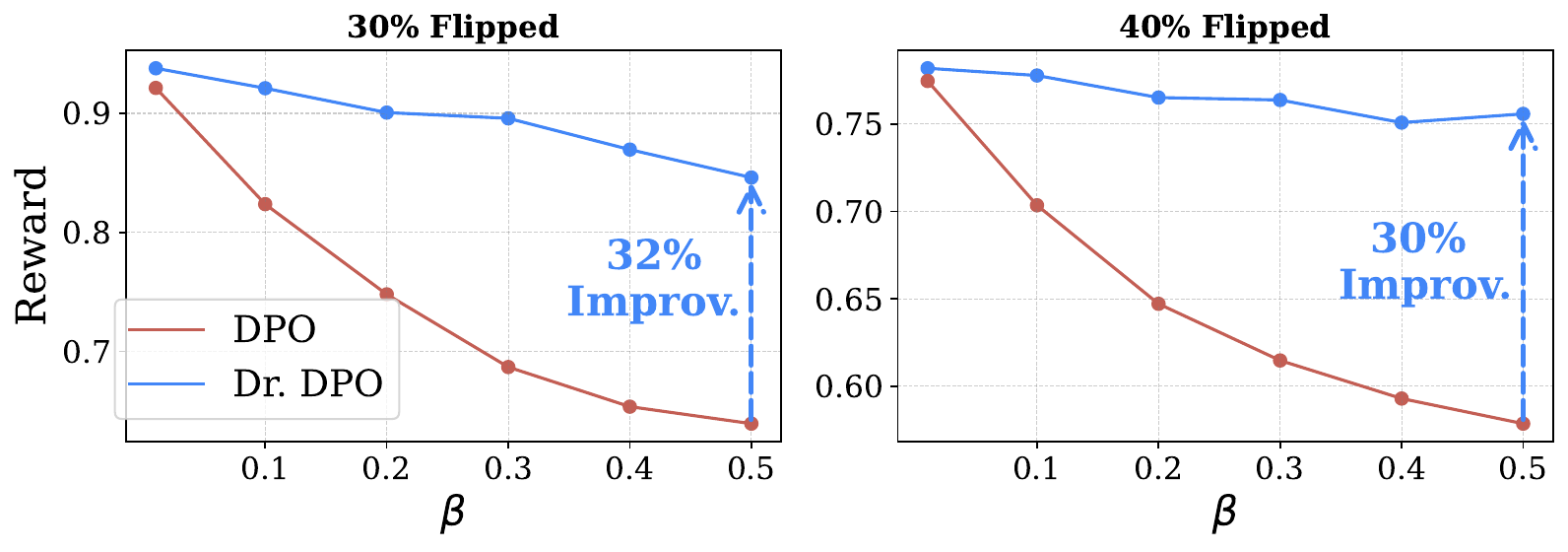}
    \caption{
        Evaluation of IMDB at 0\% and 40\% flipped pair ratios.
    } 
    \label{fig:exp_imdb} 
    \vspace{-10pt} 
\end{figure}

\subsection{Impact of Varying $\beta$}
Concomitantly, we have carried out an evaluation of the performance enhancement of Dr. DPO relative to DPO under diverse beta values, as shown in Figure \ref{fig:beta_hh}. The aforementioned Dr. DPO assures a stable performance augmentation regardless of beta selection, further attesting to the efficacy of the Dr. DPO methodology. Notably, the attained results are based on the default value of $\beta^{\prime}$, set at 1.0, negating the necessity of additional parameter adjustments to $\beta^{\prime}$ for a steady performance uplift.

\subsection{Impact of Different Temperature}
Ultimately, we ventured to experiment with the temperature coefficient evaluation for GPT-4 (\cf Table \ref{tab:win_rate_0.7}), where, at a value of 0.7, the Dr. DPO method consistently outperforms its baseline counterparts, henceforth, reaffirming the method's validity and effectiveness.

\begin{table}[h]
    \centering
    \vspace{-10pt}
    \caption{Comparison of Win Rate Performance on the Anthropic HH Dataset at 0\% and 40\% Flipped Pair Ratios (Temperature=0.7).}
    \resizebox{0.5\columnwidth}{!}{
    \begin{tabular}{@{}l|cc|l@{}}
        \toprule
        \textbf{Models} & \textbf{0\% Flipped} & \textbf{40\% Flipped} & \textbf{Improv.} \\
        \midrule
        DPO & $48.30$ & $48.49$ &$\color{+}+0.39$ \\
        cDPO & $47.96$ & $46.14$ & $\color{-}-3.79$\\
        IPO & $51.20$ & $52.39$ & $\color{+}+2.32$ \\
        \midrule
        Dr. DPO & $\textbf{53.62}$ & $\textbf{56.71}$ & $\color{+}+5.76$ \\
        \bottomrule
    \end{tabular}
    }
    \label{tab:win_rate_0.7}
    \vspace{-10pt}
\end{table}

\subsection{Discussion}
\textbf{Comparison with LDR \citet{label_dro}:} Unlike LDR, which applies DRO for robust multi-class classification, Dr. DPO is tailored for preference learning tasks. While LDR offers pointwise robustness by adjusting weights for individual class labels per instance $x$, Dr. DPO provides pairwise robustness by optimizing weights for each pair of responses $(y_w, y_l)$ within dataset $\mathcal{O}$. Furthermore, LDR seeks to reduce overfitting by decreasing the weights of selected instances, whereas Dr. DPO counters mismatched pair effects by up-weighting chosen response pairs.


\begin{figure*}[t] 
    \centering 
    \includegraphics[width=1.0\columnwidth]{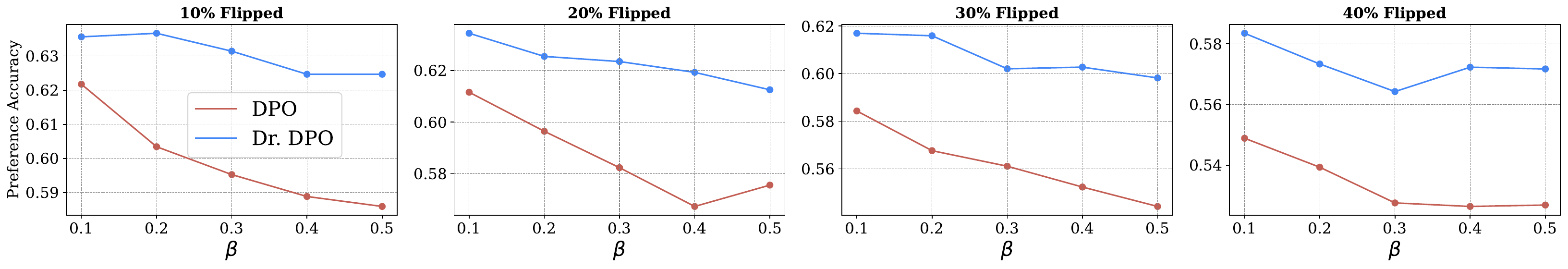}
    \caption{Preference accuracy across varying $\beta$ for different levels of pairwise noise on the Anthropic HH dataset. 
    } 
    \label{fig:beta_hh} 
\end{figure*}

\rebuttal{
\section{Generated Samples from DPO-Trained Models and Dr. DPO-Trained Models}
\label{app:examples_dpo_vs_ppo}
In this section, we present some examples to showcase the pairwise comparison between DPO-trained models and Dr. DPO-trained ones. Some comparisons on Anthropic HH and Reddit TL;DR summarization dataset  come with reference answers, to save space, we omit the reference answers here.

\begin{table}[h]
    \centering
    \caption{A qualitative pairwise comparison of samples generated by DPO and Dr. DPO, where Dr. DPO wins.}
    \begin{tabular}{lp{11.5cm}}
        \toprule
        \textbf{Prompt} & What are some easy to make Christmas decorations I can make for inside the house?
\\
\midrule
        \textbf{DPO} & There are so many easy Christmas decorations to make,  here are some ideas  -  ornaments to hang on the Christmas tree,  lights to hang on the light fixtures in the house,  ornaments to put on the doors and windows.  You can make these decorations from a selection of holiday-themed objects,  such as  silver Christmas balls,  crystal Christmas balls,  mini glass Christmas figures,  Christmas ornaments,  paper Christmas trees,  Christmas banners,  silver tinsel,  mini silver lights.

Some ideas for decorations to put up on the Christmas tree  -  Christmas trees with mini Christmas lights,  Christmas ornaments  -   mini Christmas trees,  mini Christmas lights,  Christmas banners,  paper Christmas trees,  mini tinsel garlands.

If you add a Christmas tree to your house decorations,  there are also many different types of Christmas trees to choose from,  you can choose from  Christmas trees with lighted branches,  Christmas trees with hanging lights,  Christmas trees with mini lights on the top,  Christmas trees decorated with tiny ornaments.

\\
        \midrule
        \textbf{Dr. DPO} &  You want to make holiday decorations inside the house, for Christmas? Would these be OK for you?  What I have available would make for a fun experience:
- garland
- candles
- pine cones
- Christmas lights
I can also offer to help you choose what kind of decoration you like best, to give you specific ideas for what you want to make.  If you have any questions about materials or decorating, or about the process, feel free to ask.  I will help you make the decorations, but also provide suggestions and ideas for you.

- pine cone garland - easy and fun decoration, to hang on your door, or inside your windows, and include some small pieces for each guest, or you can make them from whole pinecones and some string to attach them to.  Decorate the sides of your home with a green garland on a wooden stick.  For the garland, you could:

- make pine cone garland from a whole pinecone and string.  To make each pinecone:

- Cut a pinecone in half.

- Cut each half in half.

- Take the center of one half, and put the other half on top of it.  Repeat with all the pinecones.  If the pinecones are large, you can stack them, then use extra pinecones to “tie” them all together.  

- Cut a few small circles out of the center of each pinecone for heads and arms.  I recommend using pinewood sticks, such as Christmas tree ornaments.  Attach them to the pinecone by sticking them into the center of each cone.  You may need to add string to attach each stick to the pinecone by wrapping it around and through the center of the cone.  Practice making garland before you start, to get used to how the pinecones and sticks are arranged, to make sure you have the hang of it.

- Cut green grass cutouts to put on the garland, to decorate your doors.

\\
        \midrule
        \textbf{Judgment} &   Response of DPO provides a couple of simple decoration ideas, while Response of Dr. DPO offers a detailed step-by-step guide on how to make a specific decoration, along with offering further assistance.
 \\
        \midrule 
        \textbf{Verdict results} & DR. DPO wins.\\
        \bottomrule
    \end{tabular}
    \vspace{4mm}
    \label{tab:summ-ex-3}
\end{table}

\begin{table}[h]
    \centering
    \caption{A qualitative pairwise comparison of samples generated by DPO and Dr. DPO, where Dr. DPO wins.}
    \begin{tabular}{lp{11.5cm}}
        \toprule
        \textbf{Prompt} & SUBREDDIT: r/personalfinance
TITLE: Helping empty nesting parents, what to do?
POST: xpost from /r/legaladvice as they suggest /r/personalfinance would have good insights

I'm at a phase in life where I'm helping take care of many matters for my parents. I need to find who would be the right set of people to consult with (tax attorney? real estate attorney? cfa?) in Chicago, IL. Regarding my parents situation:

- Only social security income (addl: help with everything i can)
- 50-75k in credit card debt
- Owe 300k on primary home (unable to sell, been trying)
- Inherited ~175k property (rented it out in past - no remaining mortgage - very sellable)

I need to understand the best way for them to sell both properties (tax and other implications) and protect \$ from rental property sale to use towards general debt + retirement. Thank you so much!!!

\\
\midrule
        \textbf{DPO} & Parents are empty nesters with a lot of debt and a property to sell. Need to understand what the best way to proceed is.

\\
        \midrule
        \textbf{Dr. DPO} &  parents are struggling with huge debt, need to understand best way to utilize their assets and income streams to pay off debt. They are empty nesters and have no retirement savings. What should they do?

\\
        \midrule
        \textbf{Judgment} &   Dr. DPO does a better job of summarizing the post because it includes more details about the parents' financial situation and the need for advice on how to manage their assets and debts.
 \\
        \midrule 
        \textbf{Verdict results} & DR. DPO wins.\\
        \bottomrule
    \end{tabular}
    \vspace{4mm}
    \label{tab:summ-ex-2}
\end{table}

}

\end{document}